\documentclass{article} 
\usepackage{iclr2026_conference,times}

\usepackage{amsmath,amsfonts,bm}

\def\eqref#1{equation~\ref{#1}}

\def\1{\bm{1}}

\DeclareMathAlphabet{\mathsfit}{\encodingdefault}{\sfdefault}{m}{sl}
\SetMathAlphabet{\mathsfit}{bold}{\encodingdefault}{\sfdefault}{bx}{n}

\usepackage{hyperref}
\usepackage{url}

\usepackage{amsmath}
\usepackage{amsthm}
\usepackage{amsfonts}
\usepackage{cleveref}
\usepackage{multirow}
\usepackage{subcaption}
\usepackage{wrapfig}
\usepackage{bbm}

\usepackage{tikz}
\usetikzlibrary{cd}
\usepackage{algpseudocode}
\usepackage{algorithm2e}
\usepackage{listings}
\usepackage{minted}
\usepackage{booktabs}
\usepackage{array}

\crefname{algocf}{alg.}{algs.}
\Crefname{algocf}{Algorithm}{Algorithms}

\title{Unbiased Gradient Estimation for Event Binning via Functional Backpropagation}

\author{Jinze Chen, Wei Zhai\thanks{Correspondence to: Wei Zhai \textless\url{wzhai056@ustc.edu.cn}\textgreater}~, Han Han, Tiankai Ma, Yang Cao, Bin Li, Zheng-Jun Zha\\
MoE Key Laboratory of Brain-inspired Intelligent Perception and Cognition,\\
University of Science and Technology of China\\
\texttt{\{chjz@mail.,wzhai056@,hanh@mail.,tiankaima@mail.\}ustc.edu.cn}\\
\texttt{\{forrest,binli,zhazj\}@ustc.edu.cn}}

\newcommand{\red}[1]{{#1}}
\newcommand{\eg}{\textit{e.g., }}

\newtheorem{theorem}{Theorem}

\newtheorem{corollary}{Corollary}
\newtheorem{definition}{Definition}

\iclrfinalcopy 
\begin{document}

\maketitle

\begin{abstract}
Event-based vision encodes dynamic scenes as asynchronous spatio-temporal spikes called events. To leverage conventional image processing pipelines, events are typically binned into frames. However, binning functions are discontinuous, which truncates gradients at the frame level and forces most event-based algorithms to rely solely on frame-based features. Attempts to directly learn from raw events avoid this restriction but instead suffer from biased gradient estimation due to the discontinuities of the binning operation, ultimately limiting their learning efficiency. To address this challenge, we propose a novel framework for unbiased gradient estimation of arbitrary binning functions by synthesizing weak derivatives during backpropagation while keeping the forward output unchanged. The key idea is to exploit integration by parts: lifting the target functions to functionals yields an integral form of the derivative of the binning function during backpropagation, where the cotangent function naturally arises. By reconstructing this cotangent function from the sampled cotangent vector, we compute weak derivatives that provably match long-range finite differences of both smooth and non-smooth targets. Experimentally, our method improves simple optimization-based egomotion estimation with 3.2\% lower RMS error and 1.57$\times$ faster convergence. On complex downstream tasks, we achieve 9.4\% lower EPE in self-supervised optical flow, and 5.1\% lower RMS error in SLAM, demonstrating broad benefits for event-based visual perception. Source code can be found at \url{https://github.com/chjz1024/EventFBP}.

\end{abstract}

\section{Introduction\label{sec:introduction}}
Event-based visual perception has recently emerged as a novel paradigm to encode highly dynamic scenes with spatio-temporal event spikes. This paradigm shift brings new opportunities to high-speed and low-latency visual information processing, such as fine-grained optical flow estimation \citep{zhu2018ev,gehrig2021raft,shiba2023event,hamann2024motion,luo2024efficient,wan2024event,wan_2025_iccv,han_2025_iccv}, high-speed robot localization \citep{vidal2018ultimate,zhou2021event,hines2025compact}, and blurless video generation \citep{pan2019bringing,rebecq2019high,tulyakov2021time,wu2024event,xu2025motion,liao2025efdgs}.

To leverage conventional image processing pipelines, a common practice is to apply data binning techniques to convert irregular events into dense frames \citep{gallego2020event}, as shown in \Cref{fig:eventvisbpbias}. However, binning functions are inherently discontinuous, which truncates gradients at the frame level and forces most event-based algorithms to rely solely on frame-based features. Existing approaches often resort to smooth binning to restore differentiability to raw events, but such surrogates inevitably introduce bias in the gradients, leading to suboptimal learning efficiency.

\begin{figure}[h]
    \centering
    \includegraphics[width=.91\linewidth]{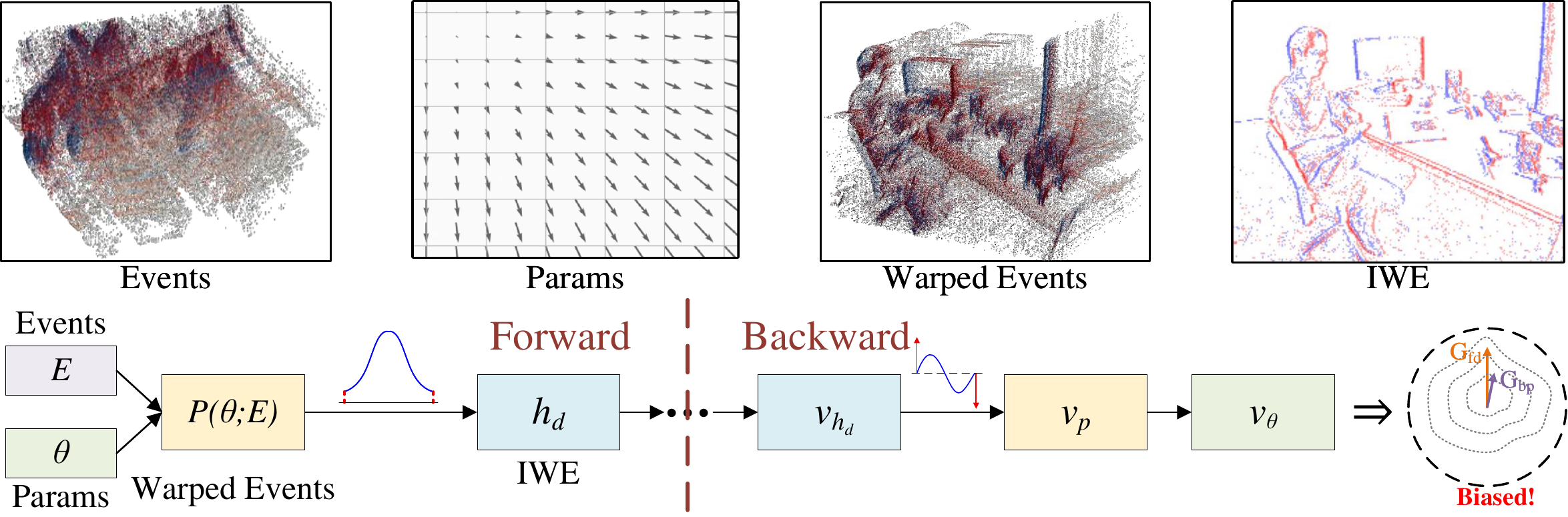}
    \caption{\red{\textbf{Event binning and the gradient bias problem.} \textbf{Top:} Spatio-temporal event clouds (Events) are geometrically warped using motion parameters (Params) and aggregated into an Image of Warped Events (IWE) using a binning function. \textbf{Bottom Left (Forward):} The warping function $P(\theta;E)$ transforms input events $E$ and parameters $\theta$ into warped coordinates. These are processed by a discontinuous binning function $h_d$ to produce the IWE. \textbf{Bottom Right (Backward):} The adjoint (cotangent vector) of the IWE, denoted as $v_{h_d}$, is propagated back to update parameters. However, the discontinuity of $h_d$ results in non-computable Dirac delta functions when computing the gradient $v_p$ for the warped events. \textbf{Result:} The computed backpropagation gradient $G_{bp}$ deviates from the true finite difference gradient $G_{fd}$, shown in the contour plot as a "Biased!" estimation.}} 
    \label{fig:eventvisbpbias}
\end{figure}

This challenge reflects a broader issue in learning with discontinuous nonlinearities in neuromorphic computing, where spiking neuron models introduce non-differentiable functions \citep{eshraghian2023training}. Existing solutions, such as surrogate gradients \citep{neftci2019surrogate} or straight-through estimators \citep{yin2018understanding}, provide heuristic gradients, but they lack unbiasedness guarantees.

From a mathematical perspective, weak derivatives \citep{kuttler2017weak} generalize classical derivatives to discontinuous functions. While their pointwise values may be non-computable (\eg Dirac delta), their integrals are well-defined and can be computed via integration by parts. This observation is crucial: if we can computationally match the integral of weak derivatives, then the resulting gradient estimation recovers unbiasedness.

\begin{wrapfigure}{l}{.45\linewidth}
    \begin{center}
        \includegraphics[width=\linewidth]{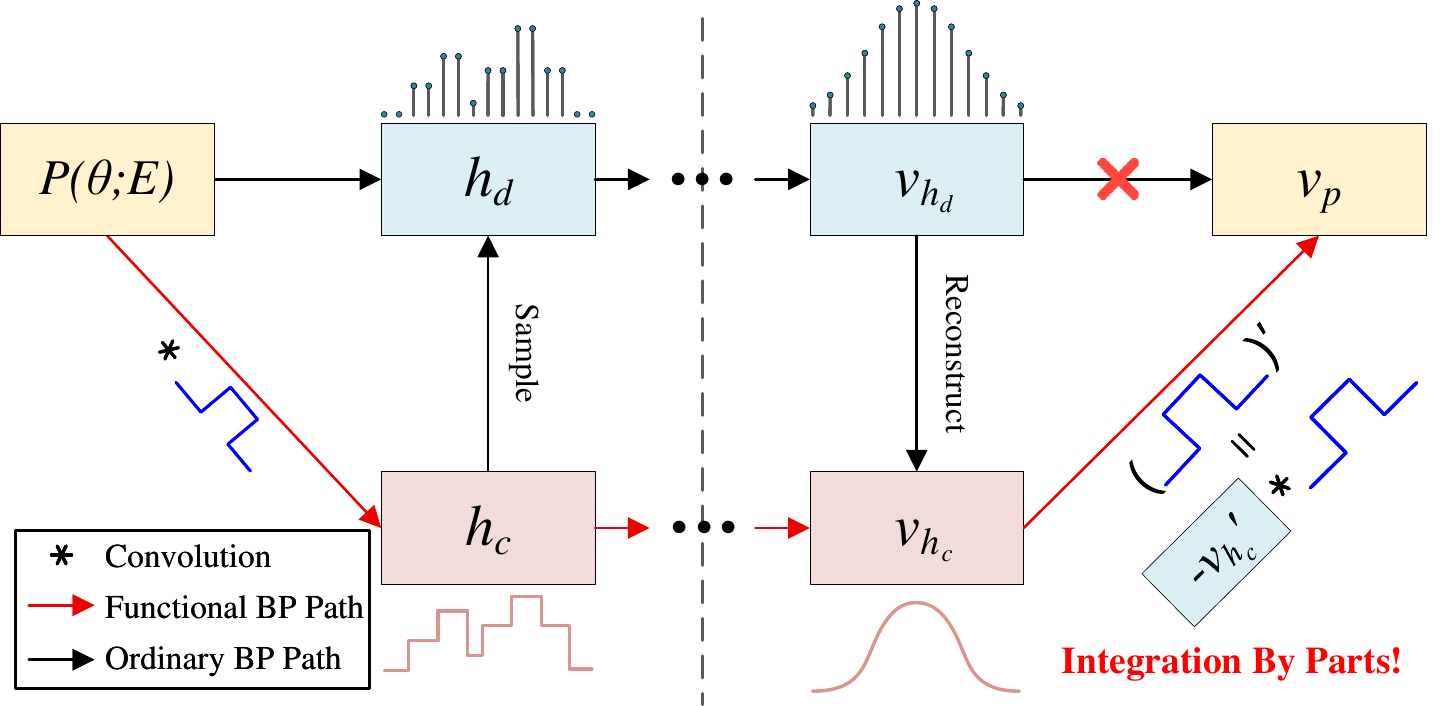}
    \end{center}
    \caption{\red{\textbf{The proposed Functional Backpropagation (FBP) framework.} To resolve discontinuities in the ordinary path (top, discrete binning $h_d$), we lift the operation to a functional space (bottom, continuous binning $h_c$). FBP bridges the two by reconstructing the continuous cotangent function $v_{h_c}$ from the discrete samples $v_{h_d}$. Using integration by parts, we replace the undefined Dirac delta evaluation with a convolution ($*$) of $v_{h_c}$ and the kernel derivative, synthesizing an unbiased gradient $v_p$.}}
    \label{fig:functionbp}
\end{wrapfigure}

To find such an integral, we dive into the backpropagation process and find that if we lift the binning function to the space of functionals, an integral of the respective gradient and the cotangent function naturally arises, which is provably sampled by the cotangent vector, as shown in \Cref{fig:functionbp}. \red{Because cotangent functions naturally encode continuous motion flow, smooth priors can be utilized to reconstruct them from the cotangent vector irrespective of the forward pass. This allows us to derive an exact formula for the synthesized weak derivative of the binning function}, which is provably shown to approximate long-range finite differences of arbitrary target functions. It also applies to continuous soft binning functions, where the synthesized gradients may help to skip local extremes.

To validate all the above claims, we first conduct analytical studies on simple tasks of event-based motion estimation, with an impressive result on the convergence speed and increased accuracy. We then conduct experiments on real-world tasks of optical-flow estimation and SLAM to show its wide applicability.

To summarize, our main contributions are:
\begin{enumerate}
    \item We identify and formalize the fundamental issue of biased gradient estimation in event-based pipelines, stemming from the discontinuity of binning functions.
    \item We propose a novel functional backpropagation framework that lifts binning functions into functional space, enabling gradient computation via weak derivatives. By applying integration by parts, our method avoids Dirac deltas and restores unbiasedness.
    \item We derive an exact formula for the synthesized weak derivative, proving its equivalence to long-range finite differences for arbitrary functions, and show that it naturally generalizes to both discontinuous and smooth kernels.
    \item We demonstrate consistent improvements across tasks: 3.7\% lower RMS velocity error with 1.57$\times$ faster convergence in controlled settings, 9.4\% lower EPE in optical flow, and 5.1\% lower RMS trajectory error in SLAM.
\end{enumerate}

\section{Related Work\label{sec:relatedwork}}

\textbf{Binning in Event-Based Vision.}
In event-based vision, binning extends beyond its canonical form to support learning diverse visual representations. Two main types can be distinguished depending on where the information resides. The first focuses on binning weights, such as constants or event polarities \citep{maqueda2018event,luo2025learning}, latest event timestamps \citep{lagorce2016hots,ghosh2025event}, or relative event intervals \citep{pan2019bringing,teng2022nest}, which are primarily used to model irradiance change, motion, and gradients, respectively. These methods typically ignore binning gradients during learning and thus suffer from information loss. The second type focuses on binning locations, where events undergo parametric transformations, and parameters are recovered by optimizing sharpness metrics of the resulting event frames \citep{gallego2018unifying,gu2021spatio,Shiba24pami}. While this approach captures complex spatio-temporal dynamics, it is fundamentally limited by the non-differentiability of the binning function. Our work is mainly concerned with this latter category.

\textbf{Learning with Discontinuous Nonlinearities.}
There are three common approaches to restoring differentiability for discontinuous functions. The most direct is function relaxation \citep{huh2018gradient}, but it alters the output and may compromise its physical meaning. Straight-through estimators (STE) \citep{yin2018understanding} heuristically replace the gradient with a surrogate, while surrogate gradients (SG) \citep{neftci2019surrogate} introduce continuous relaxations of the true gradients. However, none of these methods guarantee unbiasedness, and their effectiveness often depends on domain expertise in choosing appropriate relaxations. Our method instead reconstructs a continuous relaxation of the cotangent function, providing a complementary alternative to surrogate gradients with the key property of unbiasedness.

\textbf{Weak and Functional Derivatives.}
Our method draws on functional analysis, in particular weak derivatives and functional derivatives. Weak derivatives are defined through smooth test functions, where integration by parts provides an alternative characterization \citep{kuttler2017weak}. Functional derivatives extend the notion of gradients to functionals, playing a central role in the calculus of variations \citep{frigyik2008introduction}. While these concepts are typically of theoretical interest due to computational intractability, we show that they naturally arise in backpropagation when lifting binning functions to the space of functionals, thereby enabling their practical use in learning.

\section{Methodology\label{sec:methodology}}
This section is organized in \red{four} parts: First, we extend ordinary backpropagation to functionals where an integral form naturally arises alongside the cotangent function, by deriving the chain rule for arbitrary functionals besides integrals encountered in the calculus of variations. Then we prove that an ordinary backpropagation pass samples the cotangent function with cotangent vectors, so that signal reconstruction techniques can be applied to compute the weak derivative of binning functions. \red{Subsequently, we present a pseudocode implementation to bridge the gap between the theoretical derivation and practical application in forward and backward modes. Finally, we experimentally validate that the method approximates long-range finite differences for various loss functions, and the formal proof of unbiasedness is included in the appendix.}


We start with a notational convention to distinguish between functionals and pointwise evaluation:
\begin{itemize}
    \item Square brackets are used for functionals acting on elements of a function space; \eg $f[u]$.
    \item Parentheses are used for pointwise evaluation of ordinary functions; \eg, $g(x)$.
\end{itemize}

\subsection{Functional Backpropagation}\label{sec:fbp}
A \emph{functional} is defined by a rule, which associates a number with a function. Let $\mathcal{H}(X):=(X\to\mathbb{R})$ represent the space of real-valued functions defined on $X$, then a \emph{functional} $f$ is defined as a mapping $u\in \mathcal{H}(X)\mapsto f[u]$, where $f\left[u\right]$ can itself be another function on space $\mathcal{H}\left(Y\right)$. In this definition, functionals can be chained and used to derive the chain rule.

Since functionals are functions that consume an entire function to give another function, their differentials are also functions. From the knowledge of the Fréchet derivative and the Riesz representation theorem in functional analysis, we have the following result:
\begin{definition}
    Assuming $f$ is a differentiable functional on vectors spaces from $\mathcal{H}(X)$ to $\mathcal{H}(Y)$, then the derivative of $f$ at $u\in \mathcal{H}(X)$ is represented by the derivative kernel function $\frac{\delta f[u](y)}{\delta u(x)}$, which satisfies the limit:
    \begin{equation}
        \lim_{\|\delta u\|\to0}\frac{\|f[u+\delta u]-f[u]-\int_X\frac{\delta f[u]}{\delta u(x)}\delta u(x)dx\|_{\mathcal{H}(Y)}}{\|\delta u\|_{\mathcal{H}(X)}}=0.
    \end{equation}
\end{definition}

With the above definition, the chain rule for functionals is described in terms of integrals.

\begin{theorem}\label{theorem:functionalchainrep}
    Let $\mathcal{H}(X)\overset{f}{\to}\mathcal{H}(Y)\overset{g}{\to}\mathcal{H}(Z)$ where $f$ and $g$ are differentiable functionals, then the composite functional $g\circ f$ is also differentiable and has the representation:
    \begin{equation}\label{eq:functionalchainrep}
        \frac{\delta g[f[u]](z)}{\delta u(x)}=\int_Y \frac{\delta g[f[u]](z)}{\delta f[u](y)}\frac{\delta f[u](y)}{\delta u(x)}dy.
    \end{equation}
\end{theorem}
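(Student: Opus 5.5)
The plan is to pass through the Fréchet derivatives as bounded linear operators and then read off the kernels. Write $Df[u]:\mathcal{H}(X)\to\mathcal{H}(Y)$ for the bounded linear map $\delta u\mapsto\int_X\frac{\delta f[u](\cdot)}{\delta u(x)}\delta u(x)\,dx$ given by the Definition. Similarly write $Dg[w]:\mathcal{H}(Y)\to\mathcal{H}(Z)$ with $w=f[u]$. I will first prove the operator-level chain rule $D(g\circ f)[u]=Dg[f[u]]\circ Df[u]$. Then I will compute the kernel of this composition by exchanging the order of integration.

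\textbf{Step 1 (operator chain rule).} Set $\Delta f:=f[u+\delta u]-f[u]$. Decompose
\begin{equation*}
g[f[u+\delta u]]-g[f[u]]-Dg[w]Df[u]\delta u = \bigl(g[w+\Delta f]-g[w]-Dg[w]\Delta f\bigr)+Dg[w]\bigl(\Delta f-Df[u]\delta u\bigr).
\end{equation*}
The second term is $o(\|\delta u\|)$ because $Dg[w]$ is bounded and $f$ is differentiable. For the first term, differentiability of $f$ gives $\|\Delta f\|\le(\|Df[u]\|+1)\|\delta u\|$ for small $\delta u$. Write the first term as $\epsilon(\Delta f)\|\Delta f\|$, where $\epsilon(h)\to0$ as $h\to0$ (with $\epsilon(0):=0$). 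It is then also $o(\|\delta u\|)$. This is the standard Banach-space argument. It requires the $\mathcal{H}$'s to be normed spaces and the kernel operators to be bounded, and I will state these as standing assumptions implicit in the Definition.

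\textbf{Step 2 (kernel of the composition).} Apply $Dg[w]\circ Df[u]$ to $\delta u$ and evaluate at $z$:
\begin{equation*}
\int_Y\frac{\delta g[w](z)}{\delta w(y)}\left(\int_X\frac{\delta f[u](y)}{\delta u(x)}\delta u(x)\,dx\right)dy=\int_X\left(\int_Y\frac{\delta g[w](z)}{\delta w(y)}\frac{\delta f[u](y)}{\delta u(x)}\,dy\right)\delta u(x)\,dx.
\end{equation*}
By Step 1 this linear map satisfies the defining limit for $g\circ f$. The inner bracket is therefore a derivative kernel for $g\circ f$, which is exactly \eqref{eq:functionalchainrep}.

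\textbf{Main obstacle.} The delicate points are the Fubini exchange and the identification of the kernel. The exchange needs the integrand to be absolutely integrable on $X\times Y$. I would justify it by assuming the kernels are measurable and that $\int_X\int_Y|\cdot||\cdot||\delta u|<\infty$, for instance square-integrable kernels with $\mathcal{H}=L^2$, where Cauchy--Schwarz suffices. For distributional kernels such as Dirac deltas, which the paper later uses, I would read the identity in the weak sense against test functions $\delta u$. Uniqueness of the kernel holds only almost everywhere, or as a distribution, by the Riesz representation cited before the Definition. So I would phrase the conclusion as ``the right-hand side is a representation of the derivative,'' matching the theorem's wording.
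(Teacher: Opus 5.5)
Your proposal is correct and follows the same route as the paper's appendix. There, the Fréchet chain rule $D(g\circ f)[u]=Dg[f(u)]\circ Df[u]$ is proved with the same remainder decomposition into $Dg[f(u)](r_f(\delta u))$ and $r_g(\Delta f)$; the composite's kernel is then read off by substituting the kernel representations, exchanging the $X$ and $Y$ integrals, and invoking the arbitrariness of the test function. Your version is slightly more careful in two places: your $\epsilon(0):=0$ convention avoids the paper's division by a possibly vanishing $\|Df[u](\delta u)+r_f(\delta u)\|$, and you state explicitly the integrability needed for the Fubini step and the a.e.\ sense of kernel uniqueness, which the paper leaves implicit.
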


When $X,Y,Z$ are finite sets, \Cref{theorem:functionalchainrep} becomes the familiar chain rule $J_{g\circ f}=J_g\cdot J_f$ where $J_\cdot$ is the finite-dimensional Jacobian matrix, since integration becomes summation on finite sets. So any composable differentiable mappings have a chain rule, where every ordinary function induces a Jacobian-vector product and every functional induces an integral transform with the derivative kernel function.

To compute the Jacobians, classical backpropagation recursively computes $vJ_{g\circ f}$ using the associative property of matrix multiplication on Jacobians as $(vJ_g)\cdot J_f$, where $v\in \mathcal{H}(Z)$ is called a cotangent vector \citep{shi2024stochastic}. The extension for infinite-dimensional functionals is a function $v(\cdot)\in\mathcal{H}(Z)$, and we call it the cotangent function. In analogy, we call the process of recursively computing the cotangent function using the chain rule \red{\textbf{Functional Backpropagation (FBP)}}.


\subsection{Synthesizing Weak Derivatives of The Binning Function}
An event-based algorithm accepts a group of events as input, defined as
\begin{equation}
 \mathcal{E}=\{e_i=(t_i,x_i,y_i,p_i)\}_{i=1}^{N_e},
\end{equation}
where $e_i$ represents an event with timestamp $t_i$, pixel location $(x_i,y_i)$, and polarity $p_i\in\{-1,+1\}$ representing sign of brightness change, and $N_e$ is the number of events. A parametric transform maps $\mathcal{E}$ to D-dimensional weighted points 
\begin{equation}
    \mathcal{P}=\{e_i'=(x_{i_1}',\cdots,x_{i_D}',w_i')\}_{i=1}^{N_e},
\end{equation}
where $x_{i_d}'$ is the transformed location and $w_i'$ is the weight. The binning function $\bm{h}$ is a map from $\mathcal{P}$ to $\mathbb{R}^{W_1\times\cdots\times W_D}$ where $W_d$ is the number of bins. The value at index $(j_1,\cdots,j_D)$ is defined as
\begin{equation}
    \label{eq:ndhistdef}
    h_{j_1,...,j_D}=\sum_{i=1}^{N_e}w_i'\prod_{d=1}^Dk_d(\frac{x_{i_d}'-j_d\Delta_d}{\Delta_d}),
\end{equation}
where $k_d(\cdot)$ is the binning kernel with finite support and $\Delta_d$ is the binning width. This section provides an intuitive derivation of the formula for synthesizing the weak derivative $\frac{\partial h_{j_1,\cdots,j_D}}{\partial x_{i_d}'}$. To simplify the derivation, our discussion is restricted to the $D=1$ case, but the result can be easily extended to arbitrary dimensions. A \red{structured Theorem-Proof} is provided in the appendix.

We restate the definition of 1D event binning function $\bm{h}$ with input points $\mathcal{P}$ as:
\begin{equation}
    \bm{h}(\mathcal{P})=(h_1(\mathcal{P}),\cdots,h_W(\mathcal{P})),\quad
    h_j(\mathcal{P})=\sum_{i=1}^{N_e}w_i'k(\frac{x_{i}'-j\Delta}{\Delta}).
\end{equation}

To learn the parameters, a scalar loss is constructed from the obtained $\bm{h}(\mathcal{P})$ using $f_d:\bm{h}(\mathcal{P})\mapsto f_d[\bm{h}[\mathcal{P}]]\in\mathbb{R}$. 
A continuous binning function $h\left(\mathcal{P}\right)\in\mathcal{H}\left(\mathbb{R}\right)$ can be obtained by replacing $j\Delta$ with a continuous parameter $x$:
\begin{equation}
    h[\mathcal{P}](x)=\sum_{i=1}^{N_e}w_i'k(\frac{x_i'-x}{\Delta}),
\end{equation}
A natural mapping exists from $h\left[\mathcal{P}\right]$ to $\bm{h}\left[\mathcal{P}\right]$ using the sampling functional $\mathcal{S}:h\left[\mathcal{P}\right]\mapsto \mathcal{S}[h\left[\mathcal{P}\right]]:=(h(\Delta),\cdots,h(W\Delta))=\bm{h}\left[\mathcal{P}\right]$. The sampling operator induces a functional $f_c:=f_d\circ \mathcal{S}$. 

We denote the space of discrete binning values as $\mathcal{H}_d$ and the space of continuous binning values as $\mathcal{H}_c$, and represent their relationship using the following commutative diagram:
\begin{equation}
    \label{cd:continuousconstruction}
    \begin{tikzcd}
                                                        & \mathcal{H}_c \arrow[d] \arrow[rd, "f_c"] \arrow[d, "\mathcal{S}"] &            \\
\mathcal{P} \arrow[r, "\boldsymbol{h}"] \arrow[ru, "h"] & \mathcal{H}_d \arrow[r] \arrow[r, "f_d"]                           & \mathbb{R}
\end{tikzcd}
\end{equation}

To compute $\frac{\partial f_c}{\partial x_i'}$, functional backpropagation computes the cotangent vector $v_{x_i'}$ from $v_f$ by:
\begin{equation}
    v_h(x)=v_f\frac{\delta f_c}{\delta h(x)},\quad v_{x_i'}=\int_\mathbb{R}v_{h}(x)\frac{\partial h(x)}{\partial x_i'}dx,
\end{equation}
The cotangent function $v_h(\cdot)$ can also be computed during the normal backpropagation as $f_c=f_d\circ\mathcal{S}$. Using the chain rule, we have:
\begin{equation}\label{eq:sampling}
    v_h(x)=\sum_{j=1}^Wv_{h_j}\frac{\delta h_j}{\delta h(x)}=\sum_{j=1}^Wv_{h_j}\delta(x-j\Delta),
\end{equation}
where $v_{h_j}$ is the classical cotangent vector of $f_d$ during backward pass and $\delta(\cdot)$ is the Dirac delta. \red{In signal processing terms}, \Cref{eq:sampling} states that a normal backward pass of the binning function $\bm{h}$ surrogates the cotangent function with a Dirac comb modulated by the cotangent vector. Fixing $W\Delta$, this surrogate becomes exact as $\Delta\to0$, and at this time, we have:
\begin{equation}\label{eq:diraccombeq}
    \lim_{\Delta\to0}\frac{1}{\Delta}v_{h_j}=v_h(j\Delta).
\end{equation}
This result connects the cotangent function and cotangent vector by the sampling rule, so we can use normal signal reconstruction methods to reconstruct the cotangent function and derive an exact formula for $\frac{\partial f_c}{\partial x_i'}$. \red{Since cotangent functions naturally encode smooth motion flow}, one simple solution is to replace $\delta(\cdot)$ in \Cref{eq:sampling} with another kernel $\frac{1}{\Delta}l(\frac{\cdot}{\Delta})$, then we have:
\begin{equation}
    \tilde{v}_{x_i'}=\sum_{j=1}^W\int_\mathbb{R}v_{h_j}\frac{w_i'}{\Delta^2}l'(\frac{x-j\Delta}{\Delta})k(\frac{x_i'-x}{\Delta})dx\overset{(a)}{=}\sum_{j=1}^Ww_i'\frac{\partial}{\partial x_i'}\kappa(\frac{x_i'-j\Delta}{\Delta})v_{h_j},
\end{equation}
where $\tilde{v}_{x_i'}$ is the synthesized cotangent vector, \red{(a) relies on integration-by-parts}, $l'(x)=\frac{dl(x)}{dx}$ and
\begin{equation}\label{eq:kerconv}
    \kappa(x) = \int_\mathbb{R}l(y)k(x-y)dy=(l*k)(x).
\end{equation}
In other words, the synthesized weak derivative of $h_j$ is
\begin{equation}\label{eq:synderivative}
    \tilde{\frac{\partial h_j}{\partial x_i'}}=w_i'\frac{\partial}{\partial x_i'}\kappa(\frac{x_i'-j\Delta}{\Delta}).
\end{equation}
Comparing \Cref{eq:synderivative} with the formal derivative $\frac{\partial h_j}{\partial x_i'}=w_i'\frac{\partial}{\partial x_i'}k(\frac{x_i'-j\Delta}{\Delta})$, we see that it's equal to a surrogate gradient of kernel $\kappa(\cdot)$. 
It's possible to demonstrate the unbiasedness of our method using the concept of weak derivatives, compared to other heuristic surrogates. \red{In fact, it has finite-order approximation precision to long-range finite differences of arbitrary targets. Detailed proofs are provided in the appendix.}
\red{For the multidimensional case in \Cref{eq:ndhistdef}, the result also holds when to replace $k_d$ with $\kappa_d$ for each dimension. Implementation is provided in the following section.}

\subsection{\red{Algorithm Implementation}}

\red{To simplify the implementation of FBP despite the theoretical density, we observe that our method only modifies the backward pass gradient accumulation. To demonstrate how different dimensions interact, a 2D example \Cref{alg:fbp} illustrates that the discontinuous kernel derivative is substituted with the synthesized weak derivative $\kappa'(\cdot)$, while the forward pass remains unchanged. }

\begin{algorithm}[h]
\caption{2D Functional Binning: Primal, Forward Mode (JVP), and Backward Mode (VJP)}
\label{alg:fbp}
\SetAlgoLined
\DontPrintSemicolon
\KwIn{Locations $\mathbf{x}'_i = (x'_i, y'_i)$, weights $w'_i$, Grid params ($W \times H$, spacing $\Delta$).}
\KwIn{Kernels: Binning $k$, Reconstruction $l$. \textbf{Optional:} Tangents $\dot{\mathbf{x}}'_i = (\dot{x}'_i, \dot{y}'_i)$, Adjoints $\bar{H}$.}
\KwOut{Frame $H \in \mathbb{R}^{W \times H}$. \textbf{Optional:} Tangent Frame $\dot{H}$, Gradients $\nabla_{\mathbf{x}'} \mathcal{L}$.}
\BlankLine
$\kappa(u) \leftarrow (l * k)(u)$; $\quad \kappa'(u) \leftarrow \frac{d}{du}\kappa(u)$; \tcp*{Synthesize derivative kernel}
\BlankLine
$H \leftarrow \text{Zeros}(W, H)$; \For(\tcp*[h]{1. Primal Pass: Standard Binning}){$i \leftarrow 1$ \KwTo $N$}{
\ForEach{bin index $(u, v)$ in support of $k$ around $(x'_i, y'_i)$}{
$d_x \leftarrow (x'_i - u \Delta) / \Delta$; $\quad d_y \leftarrow (y'_i - v \Delta) / \Delta$ \tcp*{Norm. distances}
$H[u, v] \leftarrow H[u, v] + w'_i \cdot k(d_x) \cdot k(d_y)$;
}
}
\BlankLine
\If(\tcp*[h]{2. Forward Mode: Tangent Propagation}){Tangents $\dot{\mathbf{x}}'$ are provided}{
$\dot{H} \leftarrow \text{Zeros}(W, H)$;
\For{$i \leftarrow 1$ \KwTo $N$}{
\ForEach{bin index $(u, v)$ in support of $\kappa$ around $(x'_i, y'_i)$}{
$d_x \leftarrow (x'_i - u \Delta) / \Delta$; $\quad d_y \leftarrow (y'_i - v \Delta) / \Delta$;
$val_x \leftarrow \frac{1}{\Delta} \kappa'(d_x) \cdot \kappa(d_y) \cdot \dot{x}'_i$;
$val_y \leftarrow \kappa(d_x) \cdot \frac{1}{\Delta} \kappa'(d_y) \cdot \dot{y}'_i$;
$\dot{H}[u, v] \leftarrow \dot{H}[u, v] + w'_i \cdot (val_x + val_y)$;
}
}
}
\BlankLine
\If(\tcp*[h]{3. Backward Mode: Adjoint Propagation}){Adjoints $\bar{H}$ are provided}{
$\nabla_{\mathbf{x}'} \mathcal{L} \leftarrow \text{Zeros}(N, 2)$;
\For{$i \leftarrow 1$ \KwTo $N$}{
$g_x \leftarrow 0$; $\quad g_y \leftarrow 0$;
\ForEach{bin index $(u, v)$ in support of $\kappa$ around $(x'_i, y'_i)$}{
$d_x \leftarrow (x'_i - u \Delta) / \Delta$; $\quad d_y \leftarrow (y'_i - v \Delta) / \Delta$;
$g_x \leftarrow g_x + \bar{H}[u, v] \cdot w'_i \cdot \frac{1}{\Delta} \kappa'(d_x) \cdot \kappa(d_y)$;
$g_y \leftarrow g_y + \bar{H}[u, v] \cdot w'_i \cdot \kappa(d_x) \cdot \frac{1}{\Delta} \kappa'(d_y)$;
}
$(\nabla_{\mathbf{x}'} \mathcal{L})_i \leftarrow (g_x, g_y)$;
}
}
\Return $H, \dot{H}, \nabla_{\mathbf{x}'} \mathcal{L}$
\end{algorithm}

\subsection{Bias Analysis}

\red{Before applying the method to complex tasks, we explicitly validate its ability to approximate long-range finite differences. Following the methodology of \citep{gallego2019focus}, we proceed with the following steps for an event packet $\{e_k=(x_k,y_k,t_k,p_k)\}_{k=1}^{N_e}$.}

\begin{enumerate}
    \item Events are warped to the mean time using rotational models:
    \begin{equation}\label{eq:rotwarp}
        T_{rot}:e_k\mapsto(t_{ref}-t)\boldsymbol\omega\times\bm{x}_k+\bm{x}_k,\qquad\bm{x}_k=(x_k,y_k,1)\mapsto \bm{x}_k'.
    \end{equation}
    \item An Image of Warped Events (IWE) of resolution $200\times 150$ is constructed using non-differentiable (\emph{rect}), differentiable (\emph{linear}), and biased (\emph{gauss}) binning kernels:
    \begin{equation}\hspace{-0.5cm}\label{eq:kernel}
    k_{rect}(x)=\mathbbm{1}_{|x|<\frac{1}{2}},\quad
    k_{linear}(x)=(1-|x|)\mathbbm{1}_{|x|<1},\quad
    k_{gauss}(x)=\tfrac{1}{\sqrt{2\pi}}e^{-x^2/2}\mathbbm{1}_{|x|<\tfrac32}.
    \end{equation}
    \item Synthesized gradients are computed with $l(x)=\max(1-|x|,0)$ with respect to Variance (Var) \citep{gallego2018unifying} and Log-Likelihood (LL) \citep{gu2021spatio} scores:
    \begin{equation}\label{eq:score}
        \text{Var}=\tfrac{1}{N_p}\sum_{i,j}(h_{i,j}-\mu_{H})^2,\quad
        \text{LL}=\sum_{i,j}\log(\text{NB}(h_{i,j}|r,p)),
    \end{equation}
    where $\mu_H$ is the frame mean, and $(r,p)=(0.3,0.8)$. Theoretical results show that this rectangular kernel has second-order accuracy in approximating the gradient of a score.
\end{enumerate}

We analyze bias using 20,000 events from the \emph{dynamic\_rotation} sequence of the Event Camera Dataset (ECD) \citep{mueggler2017event}. A uniform grid of candidate angular velocities is sampled within $[-5,5]^3$, yielding 1,331 score evaluations and 3,993 analytic gradients. Finite-difference bias is estimated by subtracting numerical gradients \red{computed via central difference with a step size of $1.0$}. The results are shown in \Cref{fig:biasanalysis}.


\begin{figure}[h]
    \centering
    \begin{subfigure}[t]{.5\textwidth}
        \centering
        \includegraphics[width=\linewidth]{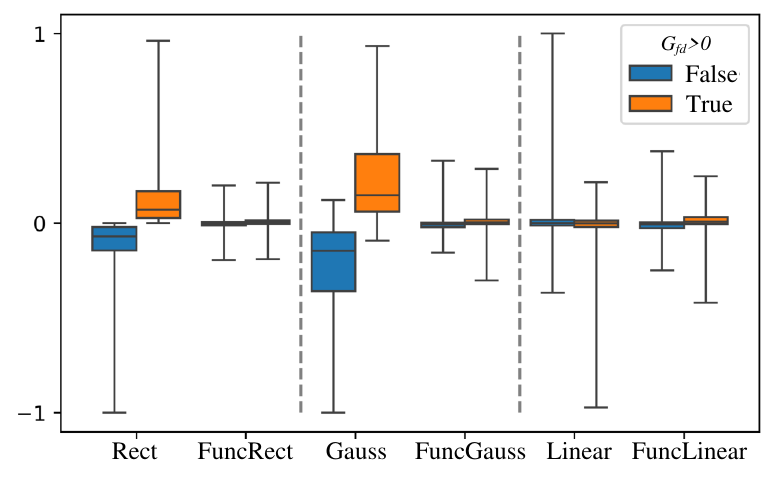}
        \caption{Normalized Var score bias.}
    \end{subfigure}%
    ~ 
    \begin{subfigure}[t]{.5\textwidth}
        \centering
        \includegraphics[width=\linewidth]{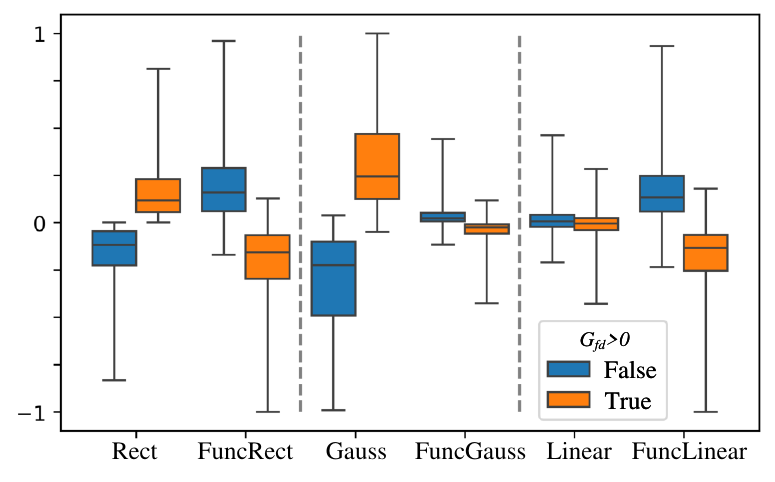}
        \caption{Normalized LL score bias.}
    \end{subfigure}
    \caption{Bias analysis results. The analytical gradients are subtracted by numerical gradients to obtain the finite-difference bias, \red{\textbf{colored by the sign of finite-difference gradients $G_{fd}$}}.\label{fig:biasanalysis}}
\end{figure}

As shown in \Cref{fig:biasanalysis}(a), the variance score exhibits strong bias with the \emph{rect} and \emph{gauss} kernels due to discontinuities, while the \emph{linear} kernel remains unbiased. Our gradient estimation substantially reduces bias across all kernels, even improving the \emph{linear} kernel by better approximating long-range finite differences.

For the log-likelihood score (\Cref{fig:biasanalysis}(b)), results are mixed: \emph{rect} kernel bias is reduced at the cost of higher variance, while the \emph{linear} kernel bias increases. \red{Consistent with our theoretical analysis}, this reflects the limited second-order accuracy of the triangular reconstruction $l(x)$ in approximating the nonlinear LL score. This suggests the exact gradient estimation method should be application-dependent, which our framework supports via adaptable cotangent reconstruction constraints.


\section{\red{Experiments}\label{sec:analysis}}
This section further validates the proposed method on event-based motion estimation tasks, specifically angular and linear velocity estimation, where the computed gradients guide the optimization. Besides the rotational motion model defined in \Cref{eq:rotwarp}, we introduce a linear motion model:
\begin{equation}\label{eq:transwarp}
    T_{trans}:e_k\mapsto(t_{ref}-t)\bm{v}+\bm{x}_k,~ \bm{x}_k=(x_k,y_k,1)\mapsto \bm{x}_k'.
\end{equation}
We evaluate all model–kernel–score combinations on eight sequences from the Event Camera Dataset (ECD), namely \textit{boxes, dynamic, poster}, and \textit{shapes} (both rotation and translation). The sequences are partitioned into non-overlapping packets of $N_e=20,000$ events to estimate motion parameters. Accuracy is quantified using Root Mean Square (RMS) error in \textdegree/s for rotation and m/s for translation. Images of Warped Events (IWEs) are constructed from projected coordinates with a resolution of $200\times 150$ and a binning step of $\Delta=0.01$. Experiments were conducted on a laptop equipped with an NVIDIA RTX 4090 GPU using the JAX framework \citep{jax2018github}. We employed distinct optimizers tailored to the kernel properties: L-BFGS-B \citep{liu1989limited} for the \emph{rect} and \emph{linear} kernels, and trust-ncg \citep{steihaug1983conjugate} for the \emph{gauss} kernel.

\subsection{Optimization Results}\label{sec:expopt}

\begin{figure}
    \centering
    \includegraphics[width=.98\linewidth]{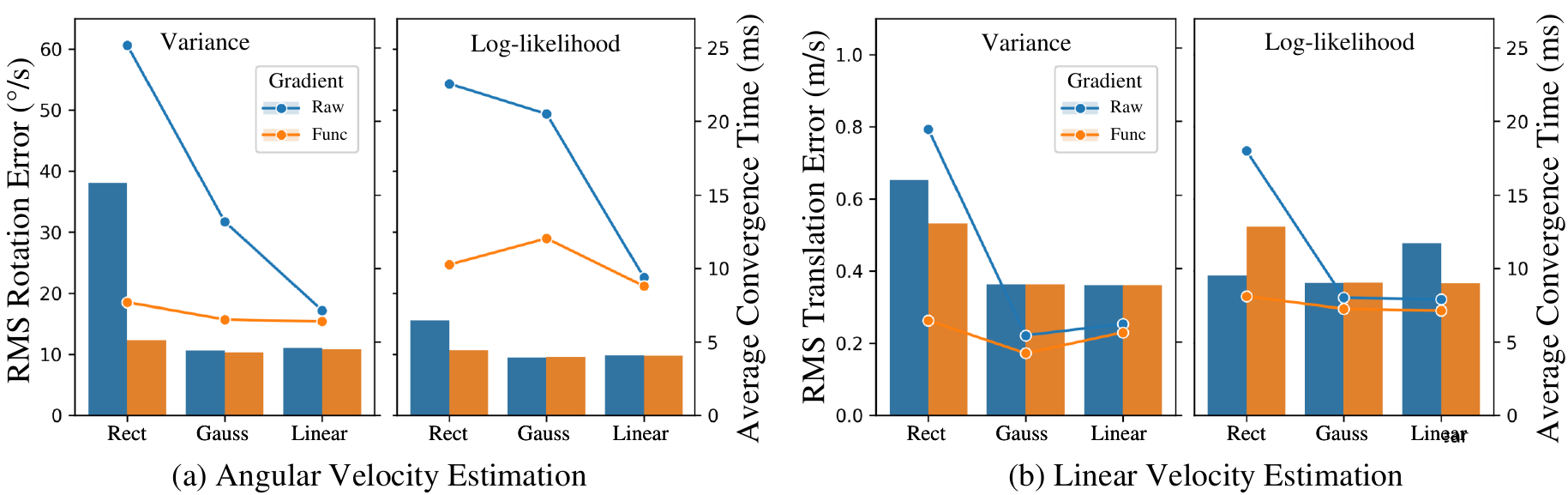}
    \caption{Optimization results for motion estimation, with combined bar charts showing RMS estimation accuracy and line charts showing the mean convergence time for every $N_e=20000$ events.}
    \label{fig:motionopt}
\end{figure}

The mean performance across all sequences is presented in \Cref{fig:motionopt}. It is evident that our method improves both accuracy and convergence speed for angular velocity estimation, achieving a 10.3\% reduction in RMS error and a $1.66\times$ acceleration in convergence. In the case of linear velocity, singularities of this task \citep{guo2024cmax} lead to a marginal 3.9\% increase in RMS error, yet the method maintains a $1.48\times$ faster convergence rate. Overall, the proposed method yields a 3.2\% improvement in RMS error alongside a $1.57\times$ speedup. Since our approach modifies only the gradients without altering the solution space, these performance gains indicate that our synthesized gradients provide more effective update directions. Although the gradient computation incurs higher theoretical complexity (\Cref{eq:kerconv}), the empirical results demonstrate that this cost is outweighed by the benefits in optimization efficiency. For complete results, please refer to the appendix.


\subsection{\red{Ablation Studies and Baseline Comparisons}}
\red{This subsection investigates the efficacy of Functional Backpropagation (FBP) using the linear reconstruction kernel. To isolate the effects of the reconstruction method, we focus on the rotational case utilizing the $k_{rect}$ binning kernel and the Var and LL score function.}



\textbf{\red{Sensitivity to Reconstruction Kernels.}} \red{For the reconstruction kernel $l(x)$, we additionally evaluate the Cubic kernel (assuming a $C^2$ smooth cotangent function) and the Lanczos kernel (assuming a band-limited cotangent function), defined as:}
\begin{equation}
\begin{gathered}
    l_{Cubic}(x)=\begin{cases}1.5|x|^3-2.5|x|^2+1 & |x|<1\\2.5|x|^2-0.5|x|^3-4|x|+2& 1<|x|\leq 2\\0&\text{otherwise}\end{cases},\\ l_{Lanczos}(x)=
        \frac{2\sin(\pi x)\sin(\frac{\pi}{2}x)}{\pi^2x^2}\mathbbm{1}_{|x|\leq 2}.
\end{gathered}
\end{equation}


\begin{table}[h]
\small
\setlength{\tabcolsep}{1.5pt}
\centering
\caption{\textbf{Ablation study on reconstruction kernels.} Different cotangent reconstruction kernels are compared in angular velocity estimation in Accuracy/Time (\textdegree/s and ms). Bold: best value.}
\label{tab:reconkernels}
\begin{tabular}{l|cc|cc|cc|cc}\toprule
\multirow{2}{*}{Kernel} & \multicolumn{2}{c|}{boxes\_rotation} & \multicolumn{2}{c|}{dynamic\_rotation} & \multicolumn{2}{c|}{poster\_rotation} & \multicolumn{2}{c}{shapes\_rotation} \\ \cmidrule{2-9} 
                        & Var             & LL                 & Var                & LL                & Var              & LL                 & Var               & LL               \\ \midrule
Bicubic                 & 12.47/7.37    & 10.17/9.17       & 6.70/8.56        & 5.88/10.78      & 14.39/7.61     & 12.48/9.54       & 16.47/11.31   & \textbf{13.82}/  18.07  \\
Lanzcos                 & 12.46/8.84    & \textbf{10.10}/10.12    & 6.72/10.36       & \textbf{5.86}/12.37      & \textbf{14.15}/9.01     & 12.45/10.66    & 17.93/14.83   & 14.25/21.12  \\
Linear                  & \textbf{12.44}/\textbf{6.50}    & 10.14/\textbf{8.07}       & \textbf{6.67}/\textbf{7.44}        & 5.88/\textbf{9.57}       & 14.26/\textbf{6.55}     & \textbf{12.42}/\textbf{8.17}       & \textbf{15.89}/\textbf{10.26}   & 14.26/\textbf{15.18} \\\bottomrule
\end{tabular}
\end{table}

\red{As shown in \Cref{tab:reconkernels}, the Linear kernel achieves the optimal trade-off between accuracy and efficiency. While higher-order kernels offer greater smoothness, they increase computational runtime with diminishing returns in estimation accuracy. Consequently, we utilize the Linear reconstruction kernel for the remainder of our experiments.}



\textbf{\red{Comparison vs. Heuristic Surrogate Gradients (SG).}} \red{We compare FBP against standard Surrogate Gradients commonly used in Spiking Neural Networks: the Straight-Through-Estimator (STE) \citep{yin2018understanding} and the Sigmoid Surrogate \citep{neftci2019surrogate}, defined as:}
\begin{equation}
    \kappa_{STE}'=-\text{sgn}(x)\mathbbm{1}_{|x|<1}, \kappa_{sigmoid}'=\sigma'(10(x+\frac{1}{2})-\sigma'(10(x-\frac{1}{2})), \sigma(x)=\frac{1}{1+\exp(-x)}.
\end{equation}

\begin{table}[h]
\small
\setlength{\tabcolsep}{1.5pt}
\centering
\caption{\textbf{Comparisons with heuristic surrogate gradients.} Different gradient surrogates are compared in angular velocity estimation in Accuracy/Time (\textdegree/s and ms). Bold: Best value.}
\label{tab:surrogategradients}
\begin{tabular}{l|cc|cc|cc|cc}\toprule
\multirow{2}{*}{Surrogate} & \multicolumn{2}{c|}{boxes\_rotation} & \multicolumn{2}{c|}{dynamic\_rotation} & \multicolumn{2}{c|}{poster\_rotation} & \multicolumn{2}{c}{shapes\_rotation} \\ \cmidrule{2-9} 
                           & Var               & LL               & Var                & LL                & Var               & LL                & Var               & LL               \\ \midrule
STE                        & 36.29/7.18      & 14.32/8.88     & 10.92/8.45       & 9.00/10.36      & 29.00/7.38      & 16.22/8.91      & 96.89/  13.07   & 55.83/  17.35  \\
Sigmoid                    & 13.94/7.55      & 10.17/8.49     & 7.12/8.74        & 5.88/9.84       & 15.60/7.70      & 12.51/8.68      & 18.76/11.41     & \textbf{14.23}/15.44    \\
FBP (Ours)                 & \textbf{12.44}/\textbf{6.50}      & \textbf{10.14}/\textbf{8.07}     & \textbf{6.67}/\textbf{7.44}        &\textbf{ 5.88}/\textbf{9.57}       & \textbf{14.26}/\textbf{6.55}      & \textbf{12.42}/\textbf{8.17}      & \textbf{15.89}/\textbf{10.26}     & 14.26/\textbf{15.18}   \\\bottomrule
\end{tabular}
\end{table}

\red{From \Cref{tab:surrogategradients}, FBP achieves a significantly lower error floor and reduced convergence time compared to heuristic SGs. This confirms that deriving the gradient via integration by parts captures the underlying binning geometry more effectively than imposing arbitrary smooth shapes.}

\subsection{\red{Computational Complexity}}

\red{To supplement the optimization analysis in \Cref{sec:expopt}, we verify the computational cost of the proposed method by comparing the execution time of a single call to the original binning function versus the gradient computation function. As Jacobian-vector products (JVPs) inherit the structure of the forward pass without requiring intermediate storage, we present the computation time for a single JVP call in float32 on both CPU and GPU in \Cref{tab:compcomplex}, where our method for different binning kernels is prefixed with Func. The data indicates that the computation time approximately doubles compared to the standard deduced JVP. However, when combined with the optimization results in \Cref{sec:expopt}, the overall efficiency and effectiveness of the method are confirmed.}

\begin{table*}[h]
    \centering
    \small
    \setlength{\tabcolsep}{5pt}
    \caption{Computation time in microseconds. Our modified gradient computation is shown in \textbf{bold.}}
\begin{tabular}{p{0.2\linewidth}|
                    >{\centering\arraybackslash}p{0.10\linewidth}
                    >{\centering\arraybackslash}p{0.10\linewidth}
                    >{\centering\arraybackslash}p{0.10\linewidth}|
                    >{\centering\arraybackslash}p{0.10\linewidth}
                    >{\centering\arraybackslash}p{0.10\linewidth}
                    >{\centering\arraybackslash}p{0.10\linewidth}}
    \toprule
    \multirow{1}{*}{Platforms} 
        & \multicolumn{3}{c|}{CPU (R9-7945HX)} 
        & \multicolumn{3}{c}{GPU (4090-laptop)} \\
    \cmidrule(lr){1-7}
     $N_e$   & 20000 & 50000 & 100000 
        & 20000 & 50000 & 100000 \\
    \midrule
    Rect                    & 37.1  & 69.9  & 110   & 31.1  & 29.6  & 29.7  \\
    \textbf{FuncRect JVP}   & \textbf{635}  & \textbf{893}  & \textbf{1440} & \textbf{42.3} & \textbf{62.4} & \textbf{78.8} \\
    \midrule
    Gauss                   & 536   & 820   & 1380  & 44.9  & 61.7  & 75.6  \\
    Gauss JVP               & 638   & 946   & 1600  & 40.6  & 62.3  & 77.7  \\
    \textbf{FuncGauss JVP}  & \textbf{1090} & \textbf{2130} & \textbf{5890} 
                            & \textbf{83.5} & \textbf{114}  & \textbf{158} \\
    \textbf{Overhead} & \textbf{1.71$\times$} & \textbf{2.25$\times$} & \textbf{3.68$\times$} & \textbf{2.06$\times$} & \textbf{1.83$\times$}  & \textbf{2.03$\times$}                      \\
    \midrule
    Linear                  & 473   & 551   & 774   & 26.8  & 37.1  & 55.6  \\
    Linear JVP              & 440   & 543   & 772   & 28.1  & 37.3  & 51.5  \\
    \textbf{FuncLinear JVP} & \textbf{736} & \textbf{1190} & \textbf{2120} 
                            & \textbf{66.8} & \textbf{83.1} & \textbf{117} \\
    \textbf{Overhead} & \textbf{1.67$\times$} & \textbf{2.19$\times$} & \textbf{2.75$\times$} & \textbf{2.38$\times$} & \textbf{2.23$\times$} & \textbf{2.27$\times$}                       \\
    \bottomrule
    \end{tabular}
    \label{tab:compcomplex}
\end{table*}

\section{Applications\label{sec:applications}}
To test the applicability of the proposed gradient computation method on modern learning pipelines, we conduct further experiments on the latest event-based optical flow estimation network MotionPriorCMax \citep{hamann2024motion} and SLAM algorithm CMax-SLAM \citep{guo2024cmax}.

\subsection{MotionPriorCMax}
MotionPriorCMax (MPC) is \red{the SOTA} self-supervised optical flow estimation network that uses the linear binning kernel $k_2$ to construct a contrast loss for learning, which is trained against the latest DSEC \citep{gehrig2021dsec} dataset. We changed the binning kernel to $k_{rect}$ using the modified gradient computation method and retrained the model on eight RTX A6000 GPUs. Reported metrics are End Point Error (EPE), Angular Error (AE), and percentage of outliers with EPE$>$3px (3PE).

\textbf{Results.} We present the average DSEC benchmark performance in \Cref{fig:appof} where our method improves the EPE error by 9.4\%, confirming that the new gradient computation method helps to find the optimal parameters even for a complex neural network. A visualization presented in the right of \Cref{fig:appof}, where optical flow output by ERAFT \citep{gehrig2021raft} is regarded as ground truth since it is not available on test sequences. It's observed that the network learns more robust features with a sharper IWE, which is only trainable with our method. For a complete result on all 7 test sequences, please refer to the appendix.




\begin{figure}[t]
    \centering
    \begin{minipage}[t]{0.38\textwidth}
        \vspace{9pt}
        \begin{tabular}{l|ccc}
        \toprule
        Method         & EPE $\downarrow$ & AE $\downarrow$ & 3PE $\downarrow$ \\ \midrule
        MPC            & 2.81            & 8.96           & 14.5            \\
        Ours   & \textbf{2.54}   & \textbf{8.33}  & \textbf{13.3}   \\   \bottomrule
        \end{tabular}
    \end{minipage}%
    \hfill
    \begin{minipage}[t]{0.6\textwidth}
        \vspace{0pt} 
        \includegraphics[width=\linewidth]{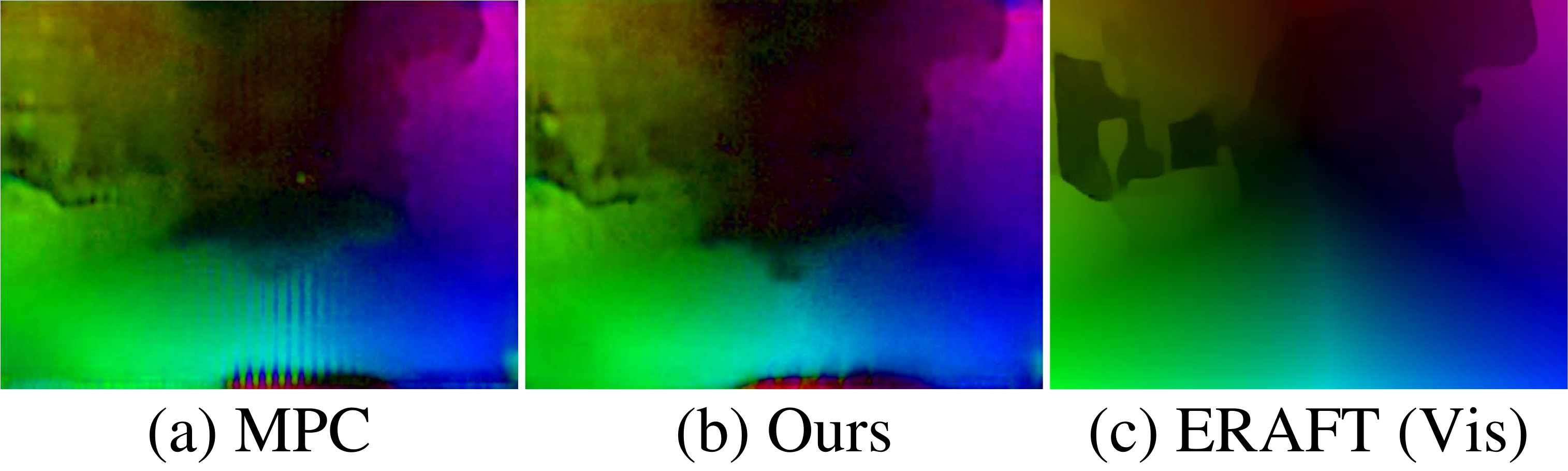}
    \end{minipage}

    \caption{\textbf{Left: }all sequence average performance on DSEC. \textbf{Right: }Predicted optical flow \red{(ERAFT only for qualitative visualization)}. Our method exhibits more robustness with fewer artifacts.}
    \label{fig:appof}
\end{figure}

\subsection{CMax-SLAM}
CMax-SLAM \citep{guo2024cmax} is \red{the SOTA} rotation-only SLAM algorithm that uses the Contrast Maximization framework \citep{gallego2018unifying} on both the frontend and backend, where a panoramic IWE is constructed as the map. We changed the binning kernel from $k_{linear}$ to $k_{rect}$ and report the RMS absolute trajectory error (Abs) in [\textdegree] and the RMS relative error (Rel) in [\textdegree/s] for the first 30s of data of the sequences in ECD \citep{mueggler2017event} for a fair comparison.

\textbf{Results.} The overall ECD benchmark results are shown in \Cref{fig:appcmaxslam}, where our method mainly improves the Abs RMS error by 5.1\% with comparable Rel RMS error. On the right side of \Cref{fig:appcmaxslam}, we show the reconstruction results of the two methods: the baseline fails on long sequences due to accumulated errors, while our method preserves correctness and yields more robust reconstructions, showing that sharp IWE helps reduce the mapping ambiguity.

\begin{figure*}[t]
    \centering
    \begin{minipage}[t]{0.58\textwidth}
        \vspace{7pt}
        \small
        \begin{tabular}{p{0.22\linewidth}|
                        >{\centering\arraybackslash}p{0.03\linewidth}
                        >{\centering\arraybackslash}p{0.048\linewidth}|
                        >{\centering\arraybackslash}p{0.03\linewidth}
                        >{\centering\arraybackslash}p{0.048\linewidth}|
                        >{\centering\arraybackslash}p{0.03\linewidth}
                        >{\centering\arraybackslash}p{0.048\linewidth}|
                        >{\centering\arraybackslash}p{0.03\linewidth}
                        >{\centering\arraybackslash}p{0.048\linewidth}}
        \toprule
        \multirow{2}{*}{Method} 
          & \multicolumn{2}{c|}{shapes} 
          & \multicolumn{2}{c|}{poster} 
          & \multicolumn{2}{c|}{boxes} 
          & \multicolumn{2}{c}{dynamic} \\
        \cmidrule{2-9}
          & Abs & Rel & Abs & Rel & Abs & Rel & Abs & Rel \\
        \midrule
        CMax-SLAM      
          & 4.95 & \textbf{6.71} 
          & 5.65 & 6.36 
          & 5.42 & 6.75 
          & 3.38 & \textbf{3.59} \\
        Ours 
          & \textbf{4.84} & 6.76 
          & \textbf{5.27} & \textbf{6.29} 
          & \textbf{5.23} & \textbf{6.60} 
          & \textbf{3.11} & 3.61 \\
        \bottomrule
        \end{tabular}
    \end{minipage}%
    \hfill
    \begin{minipage}[t]{0.4\textwidth}
        \vspace{0pt}
        \includegraphics[width=\linewidth]{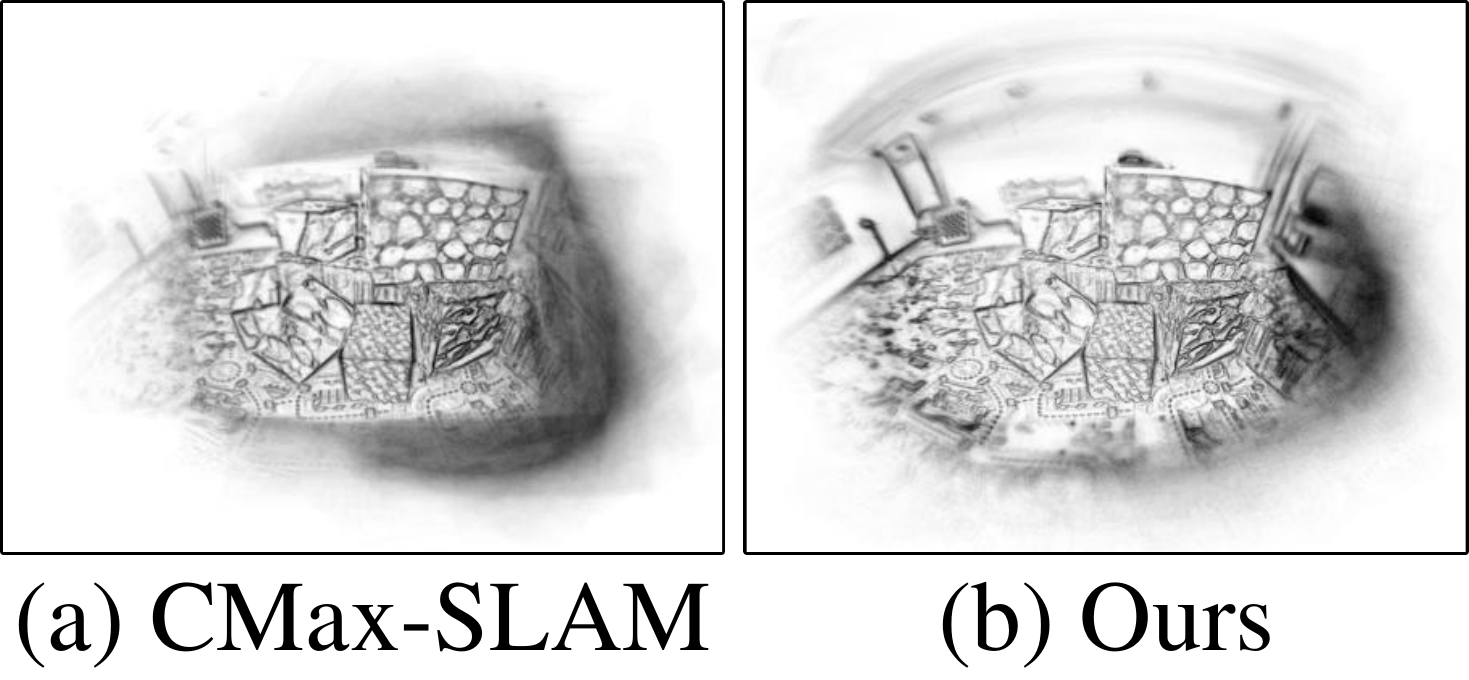}
    \end{minipage}

    \caption{\textbf{Left: }CMax-SLAM results on real-world sequences from ECD. 
             We report Absolute (Abs) and Relative (Rel) trajectory errors (best results in \textbf{bold}). 
             \textbf{Right: }Mapping result on the full sequence, 
             showing that CMax-SLAM fails on long sequences while our method preserves correctness.}
    \label{fig:appcmaxslam}
\end{figure*}

\section{Conclusion\label{sec:conclusion}}
In this work, we addressed the long-standing challenge of biased gradient estimation in event binning for event-based vision. By lifting binning functions to the functional space and leveraging weak derivatives through integration by parts, we derived an exact formula for synthesizing unbiased gradients. This framework, termed \textbf{Functional Backpropagation}, reconstructs cotangent functions during backpropagation and guarantees unbiasedness without altering the forward outputs. Our theoretical analysis established the correctness and unbiasedness of the synthesized gradients, while extensive experiments on motion estimation, optical flow, and SLAM demonstrated consistent improvements in both accuracy and convergence speed. Looking forward, our framework opens up opportunities to extend unbiased gradient estimation to a wider class of discontinuous operators in neuromorphic computing \red{and to generalizations to nonuniform binning grids and nonlinear reconstruction methods.}



\section*{Acknowledgements}
This work is supported by the National Natural Science Foundation of China (NSFC) under Grants 62225207, 62436008, 62306295, and 62576328. The AI-driven experiments, simulations and model training were performed on the robotic AI-Scientist platform of Chinese Academy of Sciences.

\bibliography{iclr2026_conference}

@inproceedings{gehrig2021raft,
  title={E-raft: Dense optical flow from event cameras},
  author={Gehrig, Mathias and Millh{\"a}usler, Mario and Gehrig, Daniel and Scaramuzza, Davide},
  booktitle={2021 International Conference on 3D Vision (3DV)},
  pages={197--206},
  year={2021},
  organization={IEEE}
}

@article{shiba2023event,
  title={Event-based background-oriented schlieren},
  author={Shiba, Shintaro and Hamann, Friedhelm and Aoki, Yoshimitsu and Gallego, Guillermo},
  journal={IEEE transactions on pattern analysis and machine intelligence},
  volume={46},
  number={4},
  pages={2011--2026},
  year={2023},
  publisher={IEEE}
}

@INPROCEEDINGS{zhu2018ev, 
    AUTHOR    = {Alex Zhu AND Liangzhe Yuan AND Kenneth Chaney AND Kostas Daniilidis}, 
    TITLE     = {EV-FlowNet: Self-Supervised Optical Flow Estimation for Event-based Cameras}, 
    BOOKTITLE = {Proceedings of Robotics: Science and Systems}, 
    YEAR      = {2018}, 
    ADDRESS   = {Pittsburgh, Pennsylvania}, 
    MONTH     = {June}, 
    DOI       = {10.15607/RSS.2018.XIV.062} 
}

@article{hines2025compact,
  title={A compact neuromorphic system for ultra--energy-efficient, on-device robot localization},
  author={Hines, Adam D and Milford, Michael and Fischer, Tobias},
  journal={Science Robotics},
  volume={10},
  number={103},
  pages={eads3968},
  year={2025},
  publisher={American Association for the Advancement of Science}
}

@article{vidal2018ultimate,
  title={Ultimate SLAM? Combining events, images, and IMU for robust visual SLAM in HDR and high-speed scenarios},
  author={Vidal, Antoni Rosinol and Rebecq, Henri and Horstschaefer, Timo and Scaramuzza, Davide},
  journal={IEEE Robotics and Automation Letters},
  volume={3},
  number={2},
  pages={994--1001},
  year={2018},
  publisher={IEEE}
}

@article{zhou2021event,
  title={Event-based stereo visual odometry},
  author={Zhou, Yi and Gallego, Guillermo and Shen, Shaojie},
  journal={IEEE Transactions on Robotics},
  volume={37},
  number={5},
  pages={1433--1450},
  year={2021},
  publisher={IEEE}
}

@inproceedings{hamann2024motion,
  title={Motion-prior contrast maximization for dense continuous-time motion estimation},
  author={Hamann, Friedhelm and Wang, Ziyun and Asmanis, Ioannis and Chaney, Kenneth and Gallego, Guillermo and Daniilidis, Kostas},
  booktitle={European Conference on Computer Vision},
  pages={18--37},
  year={2024},
  organization={Springer}
}

@article{rebecq2019high,
  title={High speed and high dynamic range video with an event camera},
  author={Rebecq, Henri and Ranftl, Ren{\'e} and Koltun, Vladlen and Scaramuzza, Davide},
  journal={IEEE transactions on pattern analysis and machine intelligence},
  volume={43},
  number={6},
  pages={1964--1980},
  year={2019},
  publisher={IEEE}
}

@inproceedings{tulyakov2021time,
  title={Time lens: Event-based video frame interpolation},
  author={Tulyakov, Stepan and Gehrig, Daniel and Georgoulis, Stamatios and Erbach, Julius and Gehrig, Mathias and Li, Yuanyou and Scaramuzza, Davide},
  booktitle={Proceedings of the IEEE/CVF conference on computer vision and pattern recognition},
  pages={16155--16164},
  year={2021}
}

@inproceedings{pan2019bringing,
  title={Bringing a blurry frame alive at high frame-rate with an event camera},
  author={Pan, Liyuan and Scheerlinck, Cedric and Yu, Xin and Hartley, Richard and Liu, Miaomiao and Dai, Yuchao},
  booktitle={Proceedings of the IEEE/CVF conference on computer vision and pattern recognition},
  pages={6820--6829},
  year={2019}
}

@inproceedings{xu2025motion,
  title={Motion-adaptive Transformer for Event-based Image Deblurring},
  author={Xu, Senyan and Sun, Zhijing and Zhong, Mingchen and Cao, Chengzhi and Liu, Yidi and Fu, Xueyang and Chen, Yan},
  booktitle={Proceedings of the AAAI Conference on Artificial Intelligence},
  volume={39},
  number={9},
  pages={8942--8950},
  year={2025}
}

@article{neftci2019surrogate,
  title={Surrogate gradient learning in spiking neural networks: Bringing the power of gradient-based optimization to spiking neural networks},
  author={Neftci, Emre O and Mostafa, Hesham and Zenke, Friedemann},
  journal={IEEE Signal Processing Magazine},
  volume={36},
  number={6},
  pages={51--63},
  year={2019},
  publisher={IEEE}
}

@article{eshraghian2023training,
  title={Training spiking neural networks using lessons from deep learning},
  author={Eshraghian, Jason K and Ward, Max and Neftci, Emre O and Wang, Xinxin and Lenz, Gregor and Dwivedi, Girish and Bennamoun, Mohammed and Jeong, Doo Seok and Lu, Wei D},
  journal={Proceedings of the IEEE},
  volume={111},
  number={9},
  pages={1016--1054},
  year={2023},
  publisher={IEEE}
}

@inproceedings{
yin2018understanding,
title={Understanding Straight-Through Estimator in Training Activation Quantized Neural Nets},
author={Penghang Yin and Jiancheng Lyu and Shuai Zhang and Stanley J. Osher and Yingyong Qi and Jack Xin},
booktitle={International Conference on Learning Representations},
year={2019},
url={https://openreview.net/forum?id=Skh4jRcKQ},
}

@article{lagorce2016hots,
  title={Hots: a hierarchy of event-based time-surfaces for pattern recognition},
  author={Lagorce, Xavier and Orchard, Garrick and Galluppi, Francesco and Shi, Bertram E and Benosman, Ryad B},
  journal={IEEE transactions on pattern analysis and machine intelligence},
  volume={39},
  number={7},
  pages={1346--1359},
  year={2016},
  publisher={IEEE}
}

@inproceedings{maqueda2018event,
  title={Event-based vision meets deep learning on steering prediction for self-driving cars},
  author={Maqueda, Ana I and Loquercio, Antonio and Gallego, Guillermo and Garc{\'\i}a, Narciso and Scaramuzza, Davide},
  booktitle={Proceedings of the IEEE conference on computer vision and pattern recognition},
  pages={5419--5427},
  year={2018}
}

@inproceedings{gallego2018unifying,
  title={A unifying contrast maximization framework for event cameras, with applications to motion, depth, and optical flow estimation},
  author={Gallego, Guillermo and Rebecq, Henri and Scaramuzza, Davide},
  booktitle={Proceedings of the IEEE conference on computer vision and pattern recognition},
  pages={3867--3876},
  year={2018}
}

@inproceedings{gu2021spatio,
  title={The spatio-temporal poisson point process: A simple model for the alignment of event camera data},
  author={Gu, Cheng and Learned-Miller, Erik and Sheldon, Daniel and Gallego, Guillermo and Bideau, Pia},
  booktitle={Proceedings of the IEEE/CVF International Conference on Computer Vision},
  pages={13495--13504},
  year={2021}
}

@article{huh2018gradient,
  title={Gradient descent for spiking neural networks},
  author={Huh, Dongsung and Sejnowski, Terrence J},
  journal={Advances in neural information processing systems},
  volume={31},
  year={2018}
}

@article{frigyik2008introduction,
  title={An introduction to functional derivatives},
  author={Frigyik, B{\'e}la A and Srivastava, Santosh and Gupta, Maya R},
  journal={Dept. Electr. Eng., Univ. Washington, Seattle, WA, Tech. Rep},
  volume={1},
  year={2008}
}

@incollection{kuttler2017weak,
  title={Weak Derivatives},
  author={Kuttler, Kenneth},
  booktitle={Modern Analysis (1997)},
  pages={355--370},
  year={2017},
  publisher={CRC Press}
}

@software{jax2018github,
  author = {James Bradbury and Roy Frostig and Peter Hawkins and Matthew James Johnson and Chris Leary and Dougal Maclaurin and George Necula and Adam Paszke and Jake Vander{P}las and Skye Wanderman-{M}ilne and Qiao Zhang},
  title = {{JAX}: composable transformations of {P}ython+{N}um{P}y programs},
  url = {http://github.com/jax-ml/jax},
  version = {0.3.13},
  year = {2018},
}

@article{mueggler2017event,
  title={The event-camera dataset and simulator: Event-based data for pose estimation, visual odometry, and SLAM},
  author={Mueggler, Elias and Rebecq, Henri and Gallego, Guillermo and Delbruck, Tobi and Scaramuzza, Davide},
  journal={The International journal of robotics research},
  volume={36},
  number={2},
  pages={142--149},
  year={2017},
  publisher={SAGE Publications Sage UK: London, England}
}

@article{gehrig2021dsec,
  title={Dsec: A stereo event camera dataset for driving scenarios},
  author={Gehrig, Mathias and Aarents, Willem and Gehrig, Daniel and Scaramuzza, Davide},
  journal={IEEE Robotics and Automation Letters},
  volume={6},
  number={3},
  pages={4947--4954},
  year={2021},
  publisher={IEEE}
}

@article{liu1989limited,
  title={On the limited memory BFGS method for large scale optimization},
  author={Liu, Dong C and Nocedal, Jorge},
  journal={Mathematical programming},
  volume={45},
  number={1},
  pages={503--528},
  year={1989},
  publisher={Springer}
}

@article{steihaug1983conjugate,
  title={The conjugate gradient method and trust regions in large scale optimization},
  author={Steihaug, Trond},
  journal={SIAM Journal on Numerical Analysis},
  volume={20},
  number={3},
  pages={626--637},
  year={1983},
  publisher={SIAM}
}

@article{guo2024cmax,
  title={CMax-SLAM: Event-based rotational-motion bundle adjustment and SLAM system using contrast maximization},
  author={Guo, Shuang and Gallego, Guillermo},
  journal={IEEE Transactions on Robotics},
  volume={40},
  pages={2442--2461},
  year={2024},
  publisher={IEEE}
}

@article{gallego2020event,
  title={Event-based vision: A survey},
  author={Gallego, Guillermo and Delbr{\"u}ck, Tobi and Orchard, Garrick and Bartolozzi, Chiara and Taba, Brian and Censi, Andrea and Leutenegger, Stefan and Davison, Andrew J and Conradt, J{\"o}rg and Daniilidis, Kostas and others},
  journal={IEEE transactions on pattern analysis and machine intelligence},
  volume={44},
  number={1},
  pages={154--180},
  year={2020},
  publisher={IEEE}
}

@article{shi2024stochastic,
  title={Stochastic taylor derivative estimator: Efficient amortization for arbitrary differential operators},
  author={Shi, Zekun and Hu, Zheyuan and Lin, Min and Kawaguchi, Kenji},
  journal={Advances in Neural Information Processing Systems},
  volume={37},
  pages={122316--122353},
  year={2024}
}

@inproceedings{gallego2019focus,
  title={Focus is all you need: Loss functions for event-based vision},
  author={Gallego, Guillermo and Gehrig, Mathias and Scaramuzza, Davide},
  booktitle={Proceedings of the IEEE/CVF Conference on Computer Vision and Pattern Recognition},
  pages={12280--12289},
  year={2019}
}

@inproceedings{paszke2019pytorch,
  title={PyTorch: An Imperative Style, High-Performance Deep Learning Library},
  author={Paszke, Adam and Gross, Sam and Chintala, Soumith and Chanan, Gregory and Yang, Edward and DeVito, Zachary and Lin, Zeming and Desmaison, Alban and Antiga, Luca and Lerer, Adam},
  booktitle={Advances in Neural Information Processing Systems},
  volume={32},
  year={2019}
}

@inproceedings{luo2024efficient,
  title={Efficient Meshflow and Optical Flow Estimation from Event Cameras},
  author={Luo, Xinglong and Luo, Ao and Wang, Zhengning and Lin, Chunyu and Zeng, Bing and Liu, Shuaicheng},
  booktitle={Proceedings of the IEEE/CVF Conference on Computer Vision and Pattern Recognition},
  pages={19198--19207},
  year={2024}
}

@Article{Shiba24pami,
  author        = {Shintaro Shiba and Yannick Klose and Yoshimitsu Aoki and Guillermo Gallego},
  title         = {Secrets of Event-based Optical Flow, Depth, and Ego-Motion by Contrast Maximization},
  journal       = {IEEE Trans. Pattern Anal. Mach. Intell. (T-PAMI)},
  year          = 2024,
  pages         = {1--18},
  doi           = {10.1109/TPAMI.2024.3396116}
}

@article{luo2025learning,
  title={Learning Efficient Meshflow and Optical Flow from Event Cameras},
  author={Luo, Xinglong and Luo, Ao and Luo, Kunming and Wang, Zhengning and Tan, Ping and Zeng, Bing and Liu, Shuaicheng},
  journal={IEEE Transactions on Pattern Analysis and Machine Intelligence},
  year={2025},
  publisher={IEEE}
}

@article{ghosh2025event,
  title={Event-based stereo depth estimation: A survey},
  author={Ghosh, Suman and Gallego, Guillermo},
  journal={IEEE Transactions on Pattern Analysis and Machine Intelligence},
  year={2025},
  publisher={IEEE}
}

@inproceedings{teng2022nest,
  title={NEST: Neural event stack for event-based image enhancement},
  author={Teng, Minggui and Zhou, Chu and Lou, Hanyue and Shi, Boxin},
  booktitle={European Conference on Computer Vision},
  pages={660--676},
  year={2022},
  organization={Springer}
}

@inproceedings{
liao2025efdgs,
title={{EF}-3{DGS}: Event-Aided Free-Trajectory 3D Gaussian Splatting},
author={Bohao Liao and Wei Zhai and Zengyu Wan and Zhixin Cheng and Wenfei Yang and Yang Cao and Tianzhu Zhang and Zheng-Jun Zha},
booktitle={The Thirty-ninth Annual Conference on Neural Information Processing Systems},
year={2025},
url={https://openreview.net/forum?id=shFhW4zqd6}
}

@InProceedings{Wan_2025_ICCV,
    author    = {Wan, Zengyu and Zhai, Wei and Cao, Yang and Zha, Zhengjun},
    title     = {EMoTive: Event-guided Trajectory Modeling for 3D Motion Estimation},
    booktitle = {Proceedings of the IEEE/CVF International Conference on Computer Vision (ICCV)},
    month     = {October},
    year      = {2025},
    pages     = {9342-9351}
}

@InProceedings{Han_2025_ICCV,
    author    = {Han, Han and Zhai, Wei and Cao, Yang and Li, Bin and Zha, Zheng-jun},
    title     = {MATE: Motion-Augmented Temporal Consistency for Event-based Point Tracking},
    booktitle = {Proceedings of the IEEE/CVF International Conference on Computer Vision (ICCV)},
    month     = {October},
    year      = {2025},
    pages     = {8340-8349}
}

@article{wan2024event,
  title={Event-based optical flow via transforming into motion-dependent view},
  author={Wan, Zengyu and Tan, Ganchao and Wang, Yang and Zhai, Wei and Cao, Yang and Zha, Zheng-Jun},
  journal={IEEE Transactions on Image Processing},
  volume={33},
  pages={5327--5339},
  year={2024},
  publisher={IEEE}
}

@article{wu2024event,
  title={Event-based asynchronous HDR imaging by temporal incident light modulation},
  author={Wu, Yuliang and Tan, Ganchao and Chen, Jinze and Zhai, Wei and Cao, Yang and Zha, Zheng-Jun},
  journal={Optics Express},
  volume={32},
  number={11},
  pages={18527--18538},
  year={2024},
  publisher={Optica Publishing Group}
}
\bibliographystyle{iclr2026_conference}


\appendix
\section{Appendix}
\subsection{Use of LLMs}
This paper uses ChatGPT only for text polishing, correcting mathematical symbols, and searching relevant work. The surrogate gradient (SG) method is one that was new knowledge to the authors and provides useful insights into the presentation of this paper.

\subsection{Functional Automatic Differentiation}
This section supplements the discussion in \Cref{sec:fbp} to cover general automatic differentiation, and to make the text self-contained with the necessary mathematical results proved.

To simplify the expression of functional automatic differentiation, we introduce the following conventions:
\begin{itemize}
    \item \textbf{Vertical bar notation.} 
    For an operator $f$, we write
    \[
        f(u|y) := [f(u)](y),
    \]
    meaning that $f$ takes a function $u\in \mathcal{H}(X)$ as input, produces a function in $\mathcal{H}(Y)$, and then we evaluate this function at $y\in Y$. 
    This notation is non-standard, but convenient for distinguishing between function mapping and function evaluation. 
    Given no risk of ambiguity, we may write $f(y):=f(u|y)$ for brevity.
    
    \item \textbf{Variational symbol $\bm{\delta}$.} 
    We use $\delta$ in analogy to Leibniz’s differential $d$, but acting on functions instead of scalar values. 
    The quotient
    \[
        \frac{\delta f(y)}{\delta g(x)}
    \]
    denotes the \emph{operator derivative} of $f(y)$ with respect to $g(x)$. 
\end{itemize}


\subsubsection{The Chain Rule of Functionals}
Fundamental to automatic differentiation is the decomposition of differentials provided by the chain rule of partial derivatives of composite functions. However, since functionals are functions that consume an entire function to give a value, the input dimension is infinite, so the derivatives cannot be expressed by finite-dimensional Jacobian matrices. To extend the chain rule to functionals, remember that derivatives are unique local linear approximations to the function. This leads to the definition of the Fréchet derivative:

\begin{definition}[Fréchet]\label{def:derivative}
    Let $V$ and $W$ be normed vector spaces, and $U\subseteq V$ be an open subset of $V$. A function $f:U\to W$ is called Fréchet differentiable at $x\in U$ if there exists a bounded linear operator $A: V\to W$ such that
    \begin{equation}
        \lim_{\|h\|_V\to 0}\frac{\|f(x+h)-f(x)-Ah\|_W}{\|h\|_V}=0.
    \end{equation}
    If there exists such an operator $A$, it is unique, so we write $Df[x]=A$ and call it the Fréchet derivative of $f$ at $x$. A function $f$ is Fréchet differentiable if $\forall x\in U$, $Df[x]$ exists.
\end{definition}

For the Fréchet derivative, the well-known Fermat's theorem also applies to identify critical points of a given functional by solving an equation involving the derivative. This result makes it possible to treat the Fréchet derivative like an ordinary derivative when solving an optimization problem.
\begin{theorem}[Fermat]\label{theorem:functionalchainrule}
    Let $V$ be a normed vector space and $f:V\to\mathbb{R}$ is a functional that is Fréchet differentiable on $V$. Then if $x_0$ is a point where $f$ has a local extremum, $Df[x_0]=\mathbf{0}$ (the zero functional).
\end{theorem}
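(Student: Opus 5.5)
We argue by contradiction and reduce to the one-dimensional Fermat theorem along each direction. Suppose $x_0$ is a local extremum of $f$, say a local minimum (for a maximum, apply the argument to $-f$). Then there is $r>0$ with $f(x_0+h)\ge f(x_0)$ whenever $\|h\|_V<r$. Write $A=Df[x_0]$, the bounded linear functional given by \Cref{def:derivative}. The goal is to show $Ah=0$ for every $h\in V$.

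Fix an arbitrary direction $h\in V$ with $h\neq 0$, and consider the scalar function $\phi(t)=f(x_0+th)$ for real $t$ with $|t|\,\|h\|_V<r$. Substitute the increment $th$ into the Fréchet limit of \Cref{def:derivative}. Since $\|th\|_V=|t|\,\|h\|_V\to 0$ as $t\to 0$, we get
\begin{equation*}
\phi(t)-\phi(0)=t\,Ah+o(|t|),
\end{equation*}
using linearity of $A$ to write $A(th)=t\,Ah$. Hence $\phi$ is differentiable at $0$ with $\phi'(0)=Ah$.

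Since $\phi$ has a local minimum at $t=0$, compare the one-sided difference quotients. For $t\to 0^+$ we have $(\phi(t)-\phi(0))/t\ge 0$, so $Ah\ge 0$. For $t\to 0^-$ the quotient is $\le 0$, so $Ah\le 0$. Therefore $Ah=0$. Because $h$ was arbitrary (and $A0=0$ trivially), $A$ is the zero functional, i.e.\ $Df[x_0]=\mathbf{0}$.

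The only step needing care is the passage from the Fréchet definition to the directional derivative: one must make sure the $o(\|th\|_V)$ remainder, divided by $t$, really vanishes. It does, because $\|th\|_V/|t|=\|h\|_V$ is a fixed constant. Everything else is the classical one-variable argument. No completeness or further structure of $V$ is needed, since the statement concerns only the single point $x_0$ and an open neighbourhood of it.
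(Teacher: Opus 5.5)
Your proof is correct and follows essentially the same route as the paper: both restrict the Fréchet expansion to the line $x_0+th$, obtain $f(x_0+th)-f(x_0)=t\,Df[x_0](h)+o(|t|)$, and let $t\to0^{\pm}$ to get the two opposite inequalities; the paper treats a local maximum where you treat a minimum, which makes no difference. One cosmetic point: you announce an argument by contradiction, but the proof you actually give is direct, so that phrase can simply be dropped.
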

\begin{proof}
    Without loss of generality, let $x_0$ be the local maxima. Then there exists $\delta>0$ such that when $\|h\|_V<\delta$,
    \begin{equation}\label{eq:prooffermat}
        f(x_0+h)\leq f(x_0).
    \end{equation}
    According to \Cref{def:derivative}, we have
    \begin{equation}
        f(x_0+h)-f(x_0)-Df[x_0](h)=o(\|h\|_V).
    \end{equation}
    For all $v\in V$, we can find a $t$ that satisfy $\|tv\|_V<\delta$, substitute $h$ for $tv$:
    \begin{equation}
        f(x_0+tv)-f(x_0)=t\cdot Df[x_0](v)+o(|t|)\leq0.
    \end{equation}
    Taking $t\to0^+$, gives $Df[x_0](v)\leq 0$, likewise with $t\to0^-$, we have $Df[x_0](v)\geq0$. Thus $Df[x_0](v)=0,\forall v \in V$ and hence $Df[x_0]=0$.
\end{proof}

The chain rule has an analogous form as ordinary derivative, but expressed as composition of linear operators.
\begin{theorem}[The Chain Rule]\label{theorem:functionalchainrulerep}
    Let $U$, $V$ and $W$ be normed vector spaces and $U\overset{f}{\to}V\overset{g}{\to}W$. If $f$ is Fréchet differentiable at $u\in U$ and $g$ is Fréchet differentiable at $f(u)\in V$, then the composite function $h=g\circ f$ is Fréchet differentiable at $u$ with the chain rule
    \begin{equation}
        Dh[u]=Dg[f(u)]\circ Df[u].
    \end{equation}
\end{theorem}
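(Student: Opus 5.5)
The plan is to verify directly that the bounded linear operator $A:=Dg[f(u)]\circ Df[u]$ satisfies the defining limit of \Cref{def:derivative} for $h=g\circ f$ at $u$. Uniqueness of the Fréchet derivative, noted after \Cref{def:derivative}, then gives $Dh[u]=A$. Boundedness of $A$ is immediate, since the composition of bounded linear operators is bounded with $\|A\|\le\|Dg[f(u)]\|\,\|Df[u]\|$. Write $L:=Df[u]$ and $M:=Dg[f(u)]$.

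First I encode differentiability through remainder terms. Set $r_f(k):=f(u+k)-f(u)-Lk$ and $r_g(p):=g(f(u)+p)-g(f(u))-Mp$. By hypothesis, $\|r_f(k)\|_V/\|k\|_U\to0$ and $\|r_g(p)\|_W/\|p\|_V\to0$. To avoid dividing by zero when $p=0$, I introduce $\varepsilon_g(p):=\|r_g(p)\|_W/\|p\|_V$ for $p\neq0$ and $\varepsilon_g(0):=0$. Then $\|r_g(p)\|_W=\varepsilon_g(p)\|p\|_V$ for all $p$, and $\varepsilon_g(p)\to0$ as $p\to0$.

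Next, for an increment $k\in U$, put $p(k):=f(u+k)-f(u)=Lk+r_f(k)$. Expanding gives
\[
h(u+k)-h(u)-MLk = Mp(k)+r_g(p(k))-MLk = M r_f(k)+r_g(p(k)).
\]
The first term is controlled by $\|M r_f(k)\|_W\le\|M\|\,\|r_f(k)\|_V=o(\|k\|_U)$. For the second term, I bound $\|p(k)\|_V\le(\|L\|+\|r_f(k)\|_V/\|k\|_U)\|k\|_U\le(\|L\|+1)\|k\|_U$ once $\|k\|_U$ is small. This shows $p(k)\to0$, hence $\varepsilon_g(p(k))\to0$. It also gives $\|r_g(p(k))\|_W\le\varepsilon_g(p(k))(\|L\|+1)\|k\|_U=o(\|k\|_U)$. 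Dividing the sum by $\|k\|_U$ and letting $k\to0$ yields the required limit.

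The main subtlety is the second remainder. The increment fed to $g$ is $p(k)$, not $k$, so I need the Lipschitz-type bound $\|p(k)\|_V\lesssim\|k\|_U$ to transfer $o(\|p\|)$ into $o(\|k\|)$. I also need the $\varepsilon_g(0)=0$ convention to handle increments with $p(k)=0$, where the naive quotient is undefined. One minor point remains: $g$ is only assumed differentiable at $f(u)$, with $f$ and $g$ defined on open sets. Continuity of $f$ at $u$, which follows from the same bound, guarantees that $f(u+k)$ stays in the domain of $g$ for small $k$.
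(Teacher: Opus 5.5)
Your proposal is correct and follows the same route as the paper: both write the remainder of $g\circ f$ as $Dg[f(u)](r_f(k)) + r_g(Df[u]k + r_f(k))$. Both bound the first term using boundedness of $Dg[f(u)]$, and bound the second by factoring it as $\frac{\|r_g(p)\|}{\|p\|}\cdot\frac{\|p\|}{\|k\|}$ with $\|p\|/\|k\|\le \|Df[u]\| + \|r_f(k)\|/\|k\|$. Your convention $\varepsilon_g(0):=0$ and your explicit argument that $p(k)\to 0$ also tidy up a point the paper leaves implicit, since it divides by $\|Df[u](\delta u)+r_f(\delta u)\|_V$ without handling the case where this vanishes.
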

\begin{proof}
    From \Cref{def:derivative}, we have for a small perturbation $\delta u$ such that $h$ is differentiable at $u+\delta u$,
    \begin{align}
        h(u+\delta u) &= g(f(u+\delta u)),\\
        &=g(f(u)+Df[u](\delta u) + r_f(\delta u)),\\
        &=h(u)+Dg[f(u)](Df[u](\delta u))+Dg[f(u)](r_f(\delta u))+r_g(Df[u](\delta u) + r_f(\delta u)),
    \end{align}
    such that $\lim_{\varepsilon\to0}\frac{\|r_f(\varepsilon)\|_V}{\|\varepsilon\|_U}=0,$ and $\lim_{\eta\to0}\frac{\|r_g(\eta)\|_W}{\|\eta\|_V}=0$. Since $Df[u]$ and $Dg[f(u)]$ are bounded, there exist constants $C_1$ and $C_2$ such that $\|Df[u](\varepsilon)\|_V\leq C_1\|\varepsilon\|_U$ and $\|Dg[f(u)](\eta)\|_W\leq C_2\|\eta\|_V$. So for the remainder, we have
    \begin{align}
        \frac{\|Dg[f(u)](r_f(\delta u))\|_W}{\|\delta u\|_U}\leq C_2\frac{\|r_f(\delta u)\|_V}{\|\delta u\|_U}\to 0
    \end{align}
    and
    \begin{align}
        \frac{\|r_g(Df[u](\delta u) + r_f(\delta u))\|_W}{\|\delta u\|_U}&=\frac{\|r_g(Df[u](\delta u) + r_f(\delta u))\|_W}{\|Df[u](\delta u) + r_f(\delta u)\|_V}\frac{\|Df[u](\delta u) + r_f(\delta u)\|_V}{\|\delta u\|_U},\\
        &\leq \frac{\|r_g(Df[u](\delta u) + r_f(\delta u))\|_W}{\|Df[u](\delta u) + r_f(\delta u)\|_V}(C_1+\frac{\|r_f(\delta u)\|_V}{\|\delta u\|_U})\to 0.
    \end{align}
    So $h(u+\delta u)=h(u)+(Dg[f(u)]\circ Df[u]) (\delta u)+o(\|\delta u\|_U)$.
\end{proof}

In automatic differentiation, closed-form expressions are required rather than abstract operators. To derive this, we add the additional constraint that all vector spaces are Hilbert spaces, so that we can apply the \textbf{Riesz Representation Theorem}. To simplify the expression, we introduce a new notation analogous to Leibniz's notation for operators.


\begin{theorem}[Riesz]
    Denote $H(X)=L^2(X)\cap C(X)$ as the space of continuous and square-integrable functions on $X$. Let $H(X)\overset{f}{\to}H(Y)\overset{g}{\to}H(Z)$, $f$ is Fréchet differentiable in $H(X)$ and $g$ is Fréchet differentiable in $H(Y)$. Denote point evaluation functionals on $H(Y)$ and $H(Z)$ as $L_y:v\mapsto v(y),\forall v\in H(Y)$ and $L_z:w\mapsto w(z),\forall w \in H(Z)$ respectively and the Riesz representation of $D(L_y\circ f)[u]$ as $\frac{\delta f(u|y)}{\delta I(u|x)}$, where $I$ represents the identity mapping, then the Riesz representation of composite functional $D(L_z\circ g\circ f)[u]$ is:
    \begin{equation}
        \frac{\delta g(f(u)|z)}{\delta I(u|x)}=\int_Y \frac{\delta g(f(u)|z)}{\delta f(u|y)}\frac{\delta f(u|y)}{\delta I(u|x)}dy.
    \end{equation}
\end{theorem}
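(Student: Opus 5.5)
The plan is to combine the abstract chain rule of \Cref{theorem:functionalchainrulerep} with the Riesz representation of each linear functional involved, and then identify kernels by uniqueness of the representer. Fix $u\in H(X)$ and $z\in Z$. The functional $L_z\circ g\circ f$ equals $(L_z\circ g)\circ f$. Point evaluation $L_z$ is linear, so $D(L_z\circ g)[v]=L_z\circ Dg[v]$, provided we take $L_z$ to be bounded on the image of $Dg$; we adopt this as the standing assumption under which the kernel notation is meaningful. \Cref{theorem:functionalchainrulerep} then gives
\begin{equation}
D(L_z\circ g\circ f)[u]=L_z\circ Dg[f(u)]\circ Df[u].
\end{equation}

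By definition, the Riesz representer of the bounded functional $L_z\circ Dg[f(u)]$ on $H(Y)$ is $y\mapsto \frac{\delta g(f(u)|z)}{\delta f(u|y)}$. Hence, for any direction $\eta\in H(Y)$,
\begin{equation}
Dg[f(u)](\eta)(z)=\int_Y \frac{\delta g(f(u)|z)}{\delta f(u|y)}\,\eta(y)\,dy.
\end{equation}
Likewise, for each $y$ and any $\xi\in H(X)$, we have $Df[u](\xi)(y)=L_y\circ Df[u](\xi)=\int_X \frac{\delta f(u|y)}{\delta I(u|x)}\xi(x)\,dx$. Substituting $\eta=Df[u](\xi)$ yields the iterated integral
\begin{equation}
\int_Y \frac{\delta g(f(u)|z)}{\delta f(u|y)}\int_X \frac{\delta f(u|y)}{\delta I(u|x)}\xi(x)\,dx\,dy.
\end{equation}

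The next step is to apply Fubini--Tonelli to swap the order of integration. This writes the functional as $\int_X K(z,x)\,\xi(x)\,dx$, where $K(z,x)$ is exactly the right-hand side of the claimed formula. On the other hand, the representer of $D(L_z\circ g\circ f)[u]$ is by definition $\frac{\delta g(f(u)|z)}{\delta I(u|x)}$. Two functions that represent the same functional against every $\xi$ in a dense subspace agree almost everywhere. They agree everywhere if both are continuous, which holds because representers lie in $H(X)\subset C(X)$.

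The main obstacle is justifying the interchange of integrals, together with the requirement that the candidate kernel lies in $L^2(X)$. For the interchange I would first try Cauchy--Schwarz. The map $y\mapsto \frac{\delta f(u|y)}{\delta I(u|x)}$ is the adjoint-type kernel of $Df[u]$, so I would require it to be jointly square integrable on $Y\times X$, i.e. $Df[u]$ of Hilbert--Schmidt type. Then $|\text{kernel}_g|\,|\text{kernel}_f|\,|\xi|$ is integrable, with bound $\|\text{kernel}_g\|_{L^2(Y)}\|\text{kernel}_f\|_{L^2(Y\times X)}\|\xi\|_{L^2(X)}$, so Fubini applies and $K(z,\cdot)\in L^2(X)$. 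A second subtlety is that $H(X)=L^2\cap C$ is not complete, so Riesz representation there only holds after extension to the closure $L^2(X)$. I would handle this by extending each bounded functional to $L^2$, doing the computation there, and noting that uniqueness on the dense subspace $H(X)$ still pins down the kernel. If necessary, I would state these integrability conditions explicitly as hypotheses implicit in the notation.
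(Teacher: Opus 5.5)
Your proposal is correct and follows the same route as the paper: use $DL_z[\cdot]=L_z$ and the chain rule to get $L_z\circ Dg[f(u)]\circ Df[u]$, substitute the Riesz kernel of $L_z\circ Dg[f(u)]$ and then that of $L_y\circ Df[u]$, interchange the $X$ and $Y$ integrals, and conclude from the arbitrariness of the test function. Your added hypotheses make explicit what the paper takes for granted: bounded point evaluation, the Hilbert--Schmidt/Cauchy--Schwarz justification of Fubini, and passing to the completion $L^2(X)$ (the paper silently swaps $\int_Y\!\int_X$ for $\int_{X\times Y}$ and applies Riesz on the incomplete space $H(X)$). The one small overreach is the upgrade to equality everywhere: in $L^2(X)$ you only know that the candidate $\int_Y(\cdots)\,dy$ lies in $L^2(X)$, not in $C(X)$, so you obtain a.e.\ equality, which is all the identity can mean there.
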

\begin{proof}
    From the definition, $L_*$ is a linear operator so $DL_*[\cdot]=L_*$. $D(L_z\circ g\circ f)[u]=L_z\circ Dg[f(u)]\circ Df[u]$ from the chain rule, then $\forall v\in H(X)$, we have
    \begin{align}
        L_z\circ Dg[f(u)]\circ Df[u](v)&=\int_Y \frac{\delta g(f(u)|z)}{\delta f(u|y)}(Df[u](v))(y)dy,\\
        &=\int_Y \frac{\delta g(f(u)|z)}{\delta f(u|y)}(L_y\circ Df[u])(v)dy,\\
        &=\int_Y\frac{\delta g(f(u)|z)}{\delta f(u|y)}dy\int_X \frac{\delta f(u|y)}{\delta I(u|x)}v(x)dx,\\
        &=\int_{X\times Y}\frac{\delta g(f(u)|z)}{\delta f(u|y)}\frac{\delta f(u|y)}{\delta I(u|x)}v(x)dxdy.
    \end{align}
    Since $v\in H(X)$ is arbitrary, the theorem is proved.
\end{proof}


%
\subsubsection{Interpretation of Automatic Differentiation of Functionals}
In ordinary AD, the task is to compute Jacobian-Vector products (forward mode) or Vector-Jacobian products (reverse mode) so that entries of the full Jacobian matrix can be computed by selecting the test vectors to be the standard basis $\bm{e}^{(i)}$ in the tangent or cotangent spaces. In functional AD, the analogue is the integration of the product of the functional derivative representation and the tangent (cotangent) function.
\begin{definition}[Functional Forward Mode AD]
    Given a Fréchet differentiable function $f:H(X)\to H(Y)$ and tangent function at $u\in H(x)$ as $v\in T_uH(X)$, the functional forward mode AD program computes the tangent function at $f(u)\in H(Y)$ as $w\in T_{f(u)}H(Y)$.
    \begin{equation}
        w(y)=\int_X\frac{\delta f(u|y)}{\delta I(u|x)}v(x)dx.
    \end{equation}
\end{definition}

\begin{definition}[Functional Reverse Mode AD]
    Given a Fréchet differentiable function $f:H(X)\to H(Y)$ and cotangent function at $f(u)\in H(Y)$ as $w^*\in T^*_{f(u)}H(Y)$, the functional reverse mode AD program computes the cotangent function at $u\in H(X)$ as $v^*\in T^*_uH(X)$.
    \begin{equation}
        v^*(x)=\int_Y\frac{\delta f(u|y)}{\delta I(u|x)}w^*(y)dy.
    \end{equation}
\end{definition}

Now, with the formal definition of functional AD, it's possible to derive a recursive procedure to compute the Riesz representation of composite functionals:
\begin{corollary}
    Let $f_i:H(X_i)\to H(X_{i+1})$ be Fréchet differentiable in $H(X_i)$ and $F_{n}=f_{n-1}\circ f_{n-2}\circ...\circ f_1$. The functional forward mode AD states that the tangent function of $F_{n}$ at $F_{n}(u)$ can be recursively computed as:
    \begin{equation}
        w(x_{m+1})=\int_{X_{m}}\frac{\delta f_{m}(F_{m}(u)|x_{m+1})}{\delta f_{m-1}(F_{m-1}(u)|x_m)}w(x_{m})dx_m
    \end{equation}
    and the functional reverse mode AD states that the cotangent function of $F_{n}$ at $u$ can be recursively computed as:
    \begin{equation}
        v^*(x_{m-1})=\int_{X_m}\frac{\delta f_{m-1}(F_{m-1}(u)|x_m)}{\delta f_{m-2}(F_{m-2}(u)|x_{m-1})}v^*(x_m)dx_m
    \end{equation}
    where we define $f_{-1}=f_0=I$ as the identity map so $F_0=F_1=I$.
\end{corollary}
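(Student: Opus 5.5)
The plan is induction on the depth of the composition, with both recursions reduced to the Riesz chain-rule theorem above and the two definitions of functional forward and reverse mode AD. Write $F_{m+1}=f_m\circ F_m$ and let $u_m:=F_m(u)$ denote the intermediate function; by the convention $f_{-1}=f_0=I$, the base case $F_1=I$ gives $u_1=u$. The kernel $\frac{\delta f_m(F_m(u)|x_{m+1})}{\delta f_{m-1}(F_{m-1}(u)|x_m)}$ is read as the Riesz representation of $D(L_{x_{m+1}}\circ f_m)[u_m]$ with respect to the variable $u_m(x_m)$. The Chain Rule (\Cref{theorem:functionalchainrulerep}) gives Fréchet differentiability of every $F_m$, so all of these objects exist.

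\textbf{Forward mode.} Let $w_m$ be the tangent function at $u_m$, defined as $w_m(x_m)=\int_{X_1}\frac{\delta F_m(u|x_m)}{\delta I(u|x_1)}v(x_1)dx_1$ for an input tangent $v$. Apply the Riesz theorem to $F_{m+1}=f_m\circ F_m$ to get
\[
\frac{\delta F_{m+1}(u|x_{m+1})}{\delta I(u|x_1)}=\int_{X_m}\frac{\delta f_m(u_m|x_{m+1})}{\delta F_m(u|x_m)}\frac{\delta F_m(u|x_m)}{\delta I(u|x_1)}dx_m .
\]
Multiply by $v(x_1)$, integrate over $X_1$, and exchange the order of integration (Fubini, justified by square-integrability in $H(\cdot)$). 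The inner integral is then $w_m(x_m)$, which gives the stated recursion for $w_{m+1}$. The induction hypothesis is just the definition of functional forward mode AD applied to $F_m$.

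\textbf{Reverse mode.} Fix the output point $x_n$, i.e.\ the cotangent at the end is the evaluation functional $L_{x_n}$. Define $v^*_m(x_m)=\frac{\delta F_n(u|x_n)}{\delta F_m(u|x_m)}$, the representation of $D(L_{x_n}\circ f_{n-1}\circ\cdots\circ f_m)[u_m]$. Now split $F_n$ as $G_m\circ F_m$ instead, where $G_m:=f_{n-1}\circ\cdots\circ f_m$, so that $G_{m-1}=G_m\circ f_{m-1}$. Applying the Riesz chain-rule theorem to this split gives $v^*_{m-1}(x_{m-1})=\int_{X_m}v^*_m(x_m)\frac{\delta f_{m-1}(u_{m-1}|x_m)}{\delta F_{m-1}(u|x_{m-1})}dx_m$. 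This is the claimed recursion, run by downward induction from $m=n$, where $v^*_n=\delta(\cdot-x_n)$ formally, or equals a general cotangent after integrating against it.

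The main obstacle is bookkeeping rather than analysis. The indices in the displayed recursion are shifted, with denominators written as $f_{m-1}(F_{m-1}(u)|\cdot)$ meaning the function $u_m$ and with the convention $f_{-1}=f_0=I$. I would fix one consistent reading first and check that the edge cases $m=1,2$ reduce to the base definitions. There is a second, technical point: point evaluation $L_y$ is not bounded on $L^2$, so the kernels are really distributions, as \Cref{eq:sampling} already shows with the Dirac comb. I would therefore state the result with kernels understood in the distributional sense and justify each Fubini exchange by pairing against test functions, as in the proof of the Riesz chain-rule theorem.
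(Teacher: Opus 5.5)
Your proposal is correct and follows the route the paper intends. The paper gives no separate proof and presents the corollary as an immediate consequence of applying the Riesz chain-rule theorem to successive splittings of the composition, which is exactly your $F_{m+1}=f_m\circ F_m$ (forward) and $G_{m-1}=G_m\circ f_{m-1}$ (reverse) argument. You can also drop the Fubini exchange in your forward step: with $g=f_m$ and $f=F_m$, the first line of the paper's proof of the Riesz theorem, $L_z\circ Dg[f(u)]\circ Df[u](v)=\int_Y \frac{\delta g(f(u)|z)}{\delta f(u|y)}(Df[u](v))(y)\,dy$, is already the forward recursion for $w_m=DF_m[u](v)$.
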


\subsubsection{The Connection Between Functions and Sampled Dirac Combs}
In \Cref{eq:diraccombeq}, we make an equation between the cotangent function and the sampled cotangent vector. This is understood in the distributional sense. In fact, we have:
\begin{theorem}
For any smooth function $g(x)\in\mathcal{H}(\mathbb{R})$, if \Cref{eq:diraccombeq} hold, we have:
\begin{equation}
    \lim_{\Delta\to 0}\int g(x)v_h(x)dx=\sum_{j=1}^W\int g(x)v_{h_j}\delta(x-j\Delta)dx.
\end{equation}    
\end{theorem}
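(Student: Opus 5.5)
The plan is to read both sides as limits as $\Delta\to0$ with $L:=W\Delta$ held fixed, as in the derivation of \Cref{eq:diraccombeq}. I then show that their difference vanishes. In other words, the Dirac comb weighted by the sampled cotangent vector converges, against smooth test functions, to the cotangent function $v_h$ restricted to the binning domain $[0,L]$.

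First I will evaluate the right-hand side with the sifting property of the Dirac delta, which gives
\[
\sum_{j=1}^W\int g(x)v_{h_j}\delta(x-j\Delta)dx=\sum_{j=1}^W g(j\Delta)\,v_{h_j}.
\]
Next I invoke \Cref{eq:diraccombeq} in the quantitative form $v_{h_j}=\Delta\bigl(v_h(j\Delta)+\varepsilon_j(\Delta)\bigr)$, where $\max_j|\varepsilon_j(\Delta)|\to0$ as $\Delta\to0$. Substituting splits the sum into two terms:
\[
\sum_{j=1}^W \Delta\, g(j\Delta)v_h(j\Delta)\;+\;\sum_{j=1}^W \Delta\, g(j\Delta)\varepsilon_j(\Delta).
\]

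The first term is a right-endpoint Riemann sum of $g\,v_h$ on $[0,L]$ with mesh $\Delta$. Since $g$ is smooth and $v_h$ is continuous, $g\,v_h$ is continuous on the compact interval $[0,L]$, so this sum converges to $\int_0^L g(x)v_h(x)dx$. Here I take $v_h$ to be supported on the binning domain, since $f_c=f_d\circ\mathcal{S}$ only depends on $h$ on $[\Delta,W\Delta]$; with that convention this integral equals $\int g\,v_h\,dx$.

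The second term is bounded by $L\,\sup_{[0,L]}|g|\,\max_j|\varepsilon_j(\Delta)|$, which tends to $0$. Combining the two limits gives the claimed identity.

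The main obstacle is the pointwise-to-uniform step. \Cref{eq:diraccombeq} is stated for each fixed $j$, but the index $j$ changes as $\Delta\to0$, so the argument needs the convergence to hold uniformly over the sample points $j\Delta\in[0,L]$. My first attempt is to interpret \Cref{eq:diraccombeq} as the statement that the piecewise-constant function $x\mapsto v_{h_{\lceil x/\Delta\rceil}}/\Delta$ converges uniformly to the continuous function $v_h$ on $[0,L]$. That is the natural meaning when $f_d$ arises from sampling a continuous $f_c$, and it yields the uniform bound on $\varepsilon_j$ directly.

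If only pointwise or $L^1$ convergence were available, I would fall back on dominated convergence. I would bound $|v_{h_j}/\Delta|$ uniformly, which is possible because $v_h$ is continuous, hence bounded, on $[0,L]$, and apply dominated convergence to the step functions in place of the Riemann-sum argument.
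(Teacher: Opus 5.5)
Your argument is correct under the uniform reading of \Cref{eq:diraccombeq}. It rests on the same observation as the paper's proof, namely that the comb side is a Riemann sum, but it reaches that point by a different route.

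The paper first tests against the hat functions $\max(1-|\frac{x-j\Delta}{\Delta}|,0)$, which are the linear reconstruction kernel $l$ placed on the grid. For each hat, the comb side returns the single sample $v_{h_j}\approx\Delta v_h(j\Delta)$. The paper then passes to smooth $g$ by approximating it with sums of hats, i.e.\ by its piecewise-linear interpolant. You skip that layer: the comb only sees the values $g(j\Delta)$, so the sifting property gives the Riemann sum for arbitrary $g$ at once.

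The paper's framing ties the statement to the test-function viewpoint and to the later choice $l(x)=\max(1-|x|,0)$. Yours gives an explicit bound on the sampling error. It also exposes what the one-line proof leaves implicit: the per-bin discrepancies must be $o(\Delta)$ uniformly in $j$ to survive summation over $W=L/\Delta$ bins.

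That uniformity need not be assumed separately. Read \Cref{eq:diraccombeq} as saying $v_{h_j}/\Delta\to v_h(x)$ whenever $\Delta\to0$ and $j\Delta\to x$. Continuity of $v_h$ on the compact interval $[0,L]$ then gives $\max_j|v_{h_j}/\Delta-v_h(j\Delta)|\to0$. If not, take $\Delta_n\to0$ and $j_n$ with discrepancy $\ge\epsilon$, extract a subsequence with $j_n\Delta_n\to x$, and note that both $v_{h_{j_n}}/\Delta_n$ and $v_h(j_n\Delta_n)$ tend to $v_h(x)$.

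Your fallback, however, fails as stated. Pointwise convergence of the step functions $s_\Delta(x)=v_{h_{\lceil x/\Delta\rceil}}/\Delta$, together with boundedness of $v_h$, gives no uniform bound on $s_\Delta$. For example, adding a spike of height $\Delta^{-2}$ on the first bin $(0,\Delta]$ preserves pointwise convergence on $(0,L]$ but adds $g(\Delta)/\Delta$ to the sum. $L^1$ convergence of $s_\Delta$ to $v_h$ does suffice, but through the estimate $|\sum_j g(j\Delta)v_{h_j}-\int_0^L g\,s_\Delta\,dx|\le\omega_g(\Delta)\,\|s_\Delta\|_{L^1}$, where $\omega_g$ is the modulus of continuity of $g$, not through dominated convergence.
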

\begin{proof}
Select simple functions $g(x)=\max(1-|\frac{x-j\Delta}{\Delta}|,0)$, the equation is proved by observing that the right-hand side constitutes the Riemann sum. As any smooth function can be approximated by summing such simple functions, the result is proved.
\end{proof}

\subsection{The Unbiasedness Constraint}\label{sec:unbiasednessconstraint}

The problem of ensuring unbiasedness is how to quantify the bias when the true gradients don't exist. To achieve that, we resort to line integrals so that the bias can be measured in the distributional sense. Let $\phi$ be an arbitrary smooth function that accepts a binned event frame $\bm{h}\in H_d$ to return a scalar $\phi(\bm{h})$, then by the fundamental theorem of calculus for line integrals, we have for every piecewise-smooth curve $\bm{h}:[a,b]\to H_d$:
\begin{equation}\label{eq:lineintegral}
    \phi(\bm{h}(b))-\phi(\bm{h}(a))=\int_a^b\nabla\phi(\bm{h}(t))\cdot \bm{h}'(t)dt,
\end{equation}
where $\bm{h}'(t)$ should be understood in the weak sense. Assuming we want to find the optimal histogram parameters by inferring the extreme values of some smooth objective, then the derivative $\bm{h}'(t)$ is the only function that makes the above equation true. In this view, if the above equation is nearly true with $\tilde{\bm{h}}'(t)$, then we call it an unbiased gradient estimation.

For a simplified equation that can be used to check the unbiasedness, we pick each entry $h_j(t)$ inside the vector dot product and approximate $\nabla_j\phi(\bm{h}(t))$ with another arbitrary smooth function $\varphi_j(t)$, then the above argument translates to:
\begin{equation}\label{eq:unbiasedness}
    \int \varphi_j(t)h_j'(t)dt=-\int\varphi_j'(t)h_j(t)dt,
\end{equation}
when $h_j\phi_n$ vanishes at the boundaries. Since there's no $\tilde{\bm{h}}'(t)$ other than $\bm{h}'(t)$ to ensure the equality for arbitrary $\varphi_i$ and curve $\bm{h}$, we only require the equality to hold for a family of simple functions on simple paths. In practice, we select the pow functions and paths on directly related parameters $x'_{i_d}$s. Since both sides of \Cref{eq:unbiasedness} are linear functionals of $\varphi_i$, equality only needs to hold for the elementary power functions. So we have the following definition.
\begin{definition}\label{def:unbiasedness}
    If for all polynomial functions $\varphi_n(t)$ of degree $n$, equality
    \begin{equation}
        \int_T \varphi_n(t)\tilde{h}'(t)dt=-\int_T\varphi_n'(t)h(t)dt
    \end{equation}
    holds for all $n\leq m$ but not for $n>m$, we say $\tilde{h}'(t)$ estimates the weak derivative $h'(t)$ with a degree of precision of $m$ on path $T$. If $m\geq 1$, we say the estimation is unbiased.
\end{definition}

The definition above quantifies how well the estimation of a weak derivative is. The definition of biasedness conforms to the fact that the derivative is the best local linear approximation. For a multidimensional gradient field approximated by the cumulative product of 1D functions, we have the same result if each function satisfies \Cref{def:unbiasedness}.

It's now easy to derive the unbiasedness constraint of the proposed method. In fact, we have the following result.

\begin{corollary}
    \label{corollary:unbiasedbinning}
    The synthetic weak derivative $\tilde{\frac{\partial h_j}{\partial x_i'}}$ in \Cref{eq:synderivative} is an unbiased estimation of the weak derivative $\frac{\partial h_j}{\partial x_i'}$ on path $T$ if the reconstruction kernel $l(\cdot)$ satisfies
    \begin{equation}
        \label{eq:unbiasedness}
        \int_Tk(x)dx=\int_T\int_{\mathbb{R}}l(y)k(x-y)dydx.
    \end{equation}
\end{corollary}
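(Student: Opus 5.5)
We work on a single coordinate: fix $i$, $j$ and move only $x_i'$ along the path $T$, holding every other event fixed. Then $h_j$ changes only through the term $w_i'k\bigl(\frac{x_i'-j\Delta}{\Delta}\bigr)$. After rescaling $t=\frac{x_i'-j\Delta}{\Delta}$, the claim reduces to a one-dimensional statement. The true curve is $h(t)=w_i'k(t)$. The synthesized derivative from \Cref{eq:synderivative} is $\tilde h'(t)=w_i'\kappa'(t)$ with $\kappa=l*k$. The factors $w_i'$ and $1/\Delta$ appear on both sides of the identity in \Cref{def:unbiasedness}, so they cancel. By \Cref{def:unbiasedness}, unbiasedness means precision degree $m\ge1$, i.e.
\[
\int_T\varphi_n(t)\kappa'(t)\,dt=-\int_T\varphi_n'(t)k(t)\,dt \qquad \text{for } n=0 \text{ and } n=1.
\]
Both sides are linear in $\varphi_n$, so it suffices to check $\varphi=1$ and $\varphi=t$.

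Take $T=[a,b]$ long enough to contain the supports of $k$ and $\kappa$; both are finite because $k$ and $l$ have finite support. I would handle the two degrees in turn.

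\textbf{Degree $0$.} With $\varphi=1$ the right-hand side is $0$. The left-hand side is $\kappa(b)-\kappa(a)=0$ by the fundamental theorem of calculus, since $\kappa$ is continuous (a convolution with the continuous $l$) and vanishes at the endpoints. So degree $0$ holds automatically, with no condition on $l$.

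\textbf{Degree $1$.} With $\varphi=t$ the right-hand side is $-\int_T k\,dt$. For the left-hand side, integrate by parts:
\[
\int_T t\,\kappa'(t)\,dt=\bigl[t\kappa(t)\bigr]_a^b-\int_T\kappa(t)\,dt=-\int_T\kappa(t)\,dt,
\]
where the boundary term vanishes again by the support argument. The degree-$1$ equality therefore becomes $\int_T\kappa=\int_T k$. Substituting \Cref{eq:kerconv}, this is exactly the stated condition $\int_T k(x)\,dx=\int_T\int_{\mathbb R}l(y)k(x-y)\,dy\,dx$. Hence the precision degree is at least $1$, which is unbiasedness in the sense of \Cref{def:unbiasedness}.

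I would also remark that when $T$ covers the whole support, Fubini gives $\int\kappa=\int l\cdot\int k$. The condition then reduces to $\int l=1$, which holds for $l(x)=\max(1-|x|,0)$.

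\textbf{Main obstacle.} The delicate step is justifying the integrations by parts when $k$ is discontinuous, as with $k_{rect}$. On the $\kappa$ side there is no difficulty, because $\kappa$ is Lipschitz and hence absolutely continuous. On the $k$ side, however, the right-hand side of \Cref{def:unbiasedness} is only defined as the weak-derivative pairing $-\int\varphi'k$. I would therefore never differentiate $k$ itself, and would move derivatives only onto $\varphi$ or $\kappa$.

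The second issue is the boundary terms when $T$ does not contain the supports. In that case they must be kept explicitly, and degree $0$ needs $\kappa(b)-\kappa(a)=0$ as well. I would first try to argue that this is precisely why the corollary states the condition with $\int_T$ rather than over $\mathbb R$: choosing $T$ as a long-range path that covers the supports makes the boundary terms vanish.

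Finally, the extension to the multidimensional $\kappa_d$ follows factorwise. The remark after \Cref{def:unbiasedness} says each one-dimensional factor suffices, so applying the argument above to each dimension completes the proof.
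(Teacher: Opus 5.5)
Your proof is correct and follows the derivation the paper leaves implicit when it says the result is ``easy to derive.'' You test \Cref{def:unbiasedness} on $\varphi=1$ and $\varphi=t$ and integrate by parts on the $\kappa$ side, so degree $0$ is automatic and degree $1$ reduces to $\int_T\kappa=\int_T k$. Your restriction to paths $T$ covering the supports of $k$ and $\kappa$ is the same vanishing-boundary assumption the paper makes (``when $h_j\phi_n$ vanishes at the boundaries''). One correction to your closing remark: on such a $T$ one has $\int_T=\int_{\mathbb{R}}$, so the $\int_T$ phrasing is not what handles the boundary terms. On paths that do not cover the supports, the boundary terms $[\varphi\kappa]_a^b$ survive and can break even the degree-$0$ identity (for example $T=[0,2]$ with $k_{rect}$).
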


\Cref{corollary:unbiasedbinning} provides a minimum requirement for the gradient estimation to be unbiased. However, since the binning function is evaluated on a regular grid, the neighborhood information of $h$ can provide useful information when selecting $l(\cdot)$ instead of a random guess. In many cases, $\phi(\cdot)$ attributes to $h_i$ a similar contribution as $h_j$ if $i$ is in a neighborhood of $j$, then we have $v_{h_i}\approx v_{h_j}$. In other words, we can assume the underlying cotangent function space to be smooth, such that local information provides enough characterization. One simple assumption is that the underlying space should have a minimum slope, then we drive the linear spline kernel $l(x)=\max(1-|x|,0)$, which has a minimum support such that \Cref{eq:unbiasedness} holds for a short path $T$. In fact, It can be proved that the resulting $(l * k)'(x)$ provides a second-order accurate estimate of the weak derivative $k'(x)$ along any path containing $\text{supp}(k) \cap [-1,1]$, provided that $k(\cdot)$ is symmetric with respect to some vertical axis. We conduct the following experiments on this linear reconstruction kernel.


\subsection{\red{PyTorch/JAX Implementation of FBP}}\label{sec:sourcecode}
This part provides the detailed implementation in PyTorch and JAX for implementing the proposed FBP.

PyTorch \citep{paszke2019pytorch} is based on reverse-mode automatic differentiation (VJP). A function with custom AD rules should be implemented as a subclass of torch.autograd.Function as follows.

\begin{lstlisting}[language=Python, caption=PyTorch implementation of FBP as custom VJP rules.]
import torch

class FunctionalBinning(torch.autograd.Function):
    @staticmethod
    def forward(ctx, xarray, yarray, weights, grid_size):
        ctx.save_for_backward(xarray, yarray, weights)
        # Standard binning with k (Forward pass is unchanged!)
        return k_kernel_binning(xarray, yarray, weights, grid_size)

    @staticmethod
    def backward(ctx, grad_frame):
        xarray, yarray, weights = ctx.saved_tensors
        # Use synthesized transposed derivative with kappa (Eq. 14)
        xarray_dot = kappa_kernel_dx_transposed(\
                grad_frame, xarray, yarray, grid_size)
        yarray_dot = kappa_kernel_dy_transposed(\
                grad_frame, xarray, yarray, grid_size)
        return xarray_dot * weights, yarray_dot * weights, None, None
\end{lstlisting}
The only extra work to do is to derive the transposed derivative kernel of $\kappa(\cdot)$. JAX \citep{jax2018github} is based on forward-mode automatic differentiation (JVP). A function with custom AD rules should be implemented by decorating it with custom\_jvp and implementing the custom JVP rules as follows.

\begin{lstlisting}[language=Python, caption=JAX implementation of FBP as custom JVP rules.]
import jax

@jax.custom_jvp
def FunctionalBinning(xarray, yarray, weights, grid_size):
    # Standard binning with k (Forward pass is unchanged!)
    return k_kernel_binning(xarray, yarray, weights, grid_size)

@FunctionalBinning.defjvp
def FunctionalBinning_JVP(primals, tangents):
    xarray, yarray, weights, grid_size = primals
    xarray_dot, yarray_dot, *_ = tangents

    # 1. Run standard forward pass
    frame = FunctionalBinning(xarray, yarray, weights, grid_size)

    # 2. Compute synthesized derivative using kappa (Eq. 14)
    # instead of evaluating Dirac deltas.
    frame_dot = kappa_kernel_dx_binning(\
            xarray, yarray, xarray_dot * weights, grid_size) +\
                kappa_kernel_dy_binning(\
            xarray, yarray, yarray_dot * weights, grid_size)

    return frame, frame_dot
\end{lstlisting}


\subsection{Experimental Details}
We first provide the exact formula of $\kappa_{rect}$, $\kappa_{linear}$, and $\kappa_{gauss}$, which are used to surrogate the gradient rule of the respective kernels $k_{rect}$, $k_{linear}$, and $k_{guass}$:
\begin{equation}
    \kappa_{rect}(x)=\begin{cases}
        \frac34-x^2, &|x|<\frac12\\
        \frac18(3-2|x|)^2, &\frac12\leq|x|<\frac32\\
        0. &|x|\geq\frac32
    \end{cases}
\end{equation}
\begin{equation}
    \kappa_{linear}(x)=\begin{cases}
        \frac16(4+3(-2+|x|)|x|^2), &|x|<1\\
        \frac16(2-|x|)^3, &1\leq|x|<2\\
        0. &|x|\geq2
    \end{cases}
\end{equation}
\begin{equation}
    \kappa_{gauss}(x)=\begin{cases}
        \frac{1}{2} (x-1) \text{erf}\left(\frac{x-1}{\sqrt{2}}\right)-x \text{erf}\left(\frac{x}{\sqrt{2}}\right)\\+\frac{1}{2} (x+1) \text{erf}\left(\frac{x+1}{\sqrt{2}}\right)+\frac{e^{-\frac{1}{2} (x+1)^2} \left(e^{2 x}-2 e^{x+\frac{1}{2}}+1\right)}{\sqrt{2 \pi }}, & |x|<\frac12\\
        \frac{1}{2} \left((|x|-1) \text{erf}\left(\frac{|x|-1}{\sqrt{2}}\right)-2x \text{erf}\left(\frac{x}{\sqrt{2}}\right)+\text{erf}\left(\frac{3}{2 \sqrt{2}}\right) |x|\right.\\\left.+\text{erf}\left(\frac{3}{2 \sqrt{2}}\right)-2 \sqrt{\frac{2}{\pi }} e^{-\frac{x^2}{2}}+\sqrt{\frac{2}{\pi }} e^{-\frac{1}{2} (|x|-1)^2}+\frac{\sqrt{\frac{2}{\pi }}}{e^{9/8}}\right), & \frac12\leq|x|<\frac32\\
        \frac{1}{2} (|x|-1) \text{erf}\left(\frac{x-1}{\sqrt{2}}\right)-\frac{1}{2} \text{erf}\left(\frac{3}{2 \sqrt{2}}\right) (|x|-1)\\+\frac{e^{-\frac{1}{2} (|x|-1)^2}}{\sqrt{2 \pi }}-\frac{1}{e^{9/8} \sqrt{2 \pi }}, & \frac32\leq|x|<\frac52\\
        0. & |x|\geq\frac52
    \end{cases}
\end{equation}

Although the expressions are definite, they are scaled to match the exact binning function, so they are adaptive to the specific scene.

\subsection{Additional Experimental Results}\label{sec:allresults}

Due to space limitations, detailed results for the analysis and applications are presented here. \Cref{tab:angvel} and \Cref{tab:linvel} provide the optimization results for all the sequences, where our methods are in bold with optimizer L-BFGS-B suffixed by 1 and trust-ncg suffixed by 2. These tables demonstrate another benefit of our method, which is to provide second-order information to facilitate optimization.

\begin{table*}[ht]
    \centering
    \small
    \caption{Angular velocity estimation results. Accuracy in \textdegree/s. The optimization time is in milliseconds. n/a means the optimization process has early stops due to abnormal function values.}
    \begin{tabular}{>{\centering\arraybackslash}p{0.05\linewidth}| 
                    >{\centering\arraybackslash}p{0.14\linewidth}|
                    >{\centering\arraybackslash}p{0.055\linewidth}
                    >{\centering\arraybackslash}p{0.07\linewidth}
                    >{\centering\arraybackslash}p{0.055\linewidth}
                    >{\centering\arraybackslash}p{0.07\linewidth}|
                    >{\centering\arraybackslash}p{0.055\linewidth}
                    >{\centering\arraybackslash}p{0.07\linewidth}
                    >{\centering\arraybackslash}p{0.055\linewidth}
                    >{\centering\arraybackslash}p{0.07\linewidth}}
    \toprule
    \multirow{3}{*}[0.1em]{Score} & \multirow{3}{*}[0.1em]{Methods} 
        & \multicolumn{2}{c}{boxes} & \multicolumn{2}{c}{dynamic} 
        & \multicolumn{2}{c}{poster} & \multicolumn{2}{c}{shapes} \\
    \cmidrule(lr){3-10}
        & & acc & time & acc & time & acc & time & acc & time \\
    \midrule
    \multirow{10}{*}{Var} 
        & Rect                  & 44.61 & 22.42 & 16.03 & 23.38 & 47.09 & 22.60 & 44.47 & 32.34 \\
        & \textbf{FuncRect 1}   & 12.44 & 6.50  & 6.67  & 7.44  & 14.26 & 6.55  & 15.89 & 10.26 \\
        & \textbf{FuncRect 2}   & 10.91 & 3.00  & 6.11  & 4.24  & 12.83 & 3.30  & 23.94 & 9.96  \\
        & Linear                & 10.21 & 6.38  & 5.63  & 6.60  & 12.58 & 6.36  & 15.82 & 9.18  \\
        & \textbf{FuncLinear 1} & 9.82  & 5.26  & 5.53  & 5.82  & 12.16 & 5.39  & 15.75 & 9.06  \\
        & \textbf{FuncLinear 2} & 9.66  & 2.61  & 5.50  & 4.77  & 12.01 & 3.19  & 14.93 & 13.07 \\
        & Gauss 1               & 10.93 & 7.14  & 6.59  & 8.38  & 12.51 & 7.41  & 14.29 & 11.13 \\
        & Gauss 2               & 9.61  & 5.01  & 6.22  & 8.36  & 10.80 & 5.49  & 15.79 & 33.87 \\
        & \textbf{FuncGauss 1}  & 10.27 & 6.91  & 6.22  & 7.98  & 11.91 & 7.22  & 13.81 & 11.45 \\
        & \textbf{FuncGauss 2}  & 9.95  & 2.46  & 5.98  & 5.59  & 11.59 & 3.26  & 13.85 & 14.78 \\
    \midrule
    \multirow{10}{*}{LL}  
        & Rect                  & 17.47 & 21.02 & 8.58  & 21.50 & 17.45 & 20.00 & 18.62 & 27.73 \\
        & \textbf{FuncRect 1}   & 10.14 & 8.07  & 5.88  & 9.57  & 12.42 & 8.17  & 14.26 & 15.18 \\
        & \textbf{FuncRect 2}   & 10.95 & 4.31  & 8.48  & 8.73  & 13.80 & 5.67  & n/a   & n/a   \\
        & Linear                & 9.04  & 7.29  & 5.33  & 8.93  & 11.53 & 7.98  & 13.36 & 13.30 \\
        & \textbf{FuncLinear 1} & 8.77  & 6.74  & 5.21  & 7.97  & 11.30 & 6.71  & 13.74 & 13.72 \\
        & \textbf{FuncLinear 2} & 8.70  & 3.29  & 5.21  & 7.35  & 11.22 & 4.05  & 14.62 & 34.21 \\
        & Gauss 1               & 8.99  & 8.68  & 5.64  & 10.28 & 10.97 & 8.80  & 13.46 & 15.89 \\
        & Gauss 2               & 7.93  & 5.95  & 5.36  & 12.10 & 9.97  & 6.85  & 14.54 & 57.10 \\
        & \textbf{FuncGauss 1}  & 8.89  & 8.64  & 5.49  & 10.17 & 10.91 & 8.59  & 13.37 & 16.75 \\
        & \textbf{FuncGauss 2}  & 8.64  & 3.80  & 5.35  & 9.51  & 10.74 & 5.30  & 13.41 & 29.52 \\
    \bottomrule
    \end{tabular}
    \label{tab:angvel}
\end{table*}

\begin{table*}[ht]
    \centering
    \small
    \caption{Linear velocity estimation results. Accuracy in m/s. The optimization time is in milliseconds. n/a means the optimization process has early stops due to abnormal function values.}
    \begin{tabular}{>{\centering\arraybackslash}p{0.05\linewidth}| 
                    >{\centering\arraybackslash}p{0.14\linewidth}|
                    >{\centering\arraybackslash}p{0.055\linewidth}
                    >{\centering\arraybackslash}p{0.07\linewidth}
                    >{\centering\arraybackslash}p{0.055\linewidth}
                    >{\centering\arraybackslash}p{0.07\linewidth}|
                    >{\centering\arraybackslash}p{0.055\linewidth}
                    >{\centering\arraybackslash}p{0.07\linewidth}
                    >{\centering\arraybackslash}p{0.055\linewidth}
                    >{\centering\arraybackslash}p{0.07\linewidth}}
    \toprule
    \multirow{3}{*}[0.1em]{Score} & \multirow{3}{*}[0.1em]{Methods} 
        & \multicolumn{2}{c}{boxes} & \multicolumn{2}{c}{dynamic} 
        & \multicolumn{2}{c}{poster} & \multicolumn{2}{c}{shapes} \\
    \cmidrule(lr){3-10}
        & & acc & time & acc & time & acc & time & acc & time \\
    \midrule
    \multirow{10}{*}{Var} 
        & Rect                  & 0.924 & n/a   & 0.447 & n/a   & 0.326 & 19.45 & 0.913 & n/a   \\
        & \textbf{FuncRect 1}   & 0.875 & n/a   & 0.209 & 7.93  & 0.304 & 6.45  & 0.739 & n/a   \\
        & \textbf{FuncRect 2}   & 0.679 & 3.40  & 0.208 & 5.18  & 0.304 & 3.55  & 0.257 & 10.73 \\
        & Linear                & 0.680 & 6.28  & 0.208 & 6.75  & 0.304 & 6.22  & 0.250 & 8.33  \\
        & \textbf{FuncLinear 1} & 0.683 & 5.63  & 0.208 & 6.78  & 0.303 & 5.64  & 0.249 & 8.16  \\
        & \textbf{FuncLinear 2} & 0.683 & 3.48  & 0.208 & 6.77  & 0.303 & 3.71  & 0.250 & 12.73 \\
        & Gauss 1               & 1.069 & n/a   & 0.209 & 8.14  & 0.803 & n/a   & 0.796 & n/a   \\
        & Gauss 2               & 0.689 & 5.26  & 0.209 & 8.67  & 0.305 & 5.44  & 0.249 & 24.64 \\
        & \textbf{FuncGauss 1}  & 0.688 & 6.84  & 0.209 & 8.81  & 0.304 & 7.02  & 0.248 & 11.45 \\
        & \textbf{FuncGauss 2}  & 0.689 & 3.85  & 0.209 & 8.38  & 0.305 & 4.22  & 0.249 & 15.35 \\
    \midrule
    \multirow{10}{*}{LL}  
        & Rect                  & 0.763 & 18.34 & 0.217 & 20.05 & 0.308 & 18.00 & 0.258 & 25.84 \\
        & \textbf{FuncRect 1}   & 0.681 & 7.97  & 0.432 & n/a   & 0.303 & 8.10  & 0.673 & n/a   \\
        & \textbf{FuncRect 2}   & 0.683 & 6.00  & 0.218 & 12.12 & 0.304 & 6.28  & 0.663 & n/a   \\
        & Linear                & 0.684 & 7.91  & 0.215 & 10.11 & 0.304 & 7.89  & 0.706 & n/a   \\
        & \textbf{FuncLinear 1} & 0.685 & 6.96  & 0.215 & 9.61  & 0.304 & 7.13  & 0.257 & 14.01 \\
        & \textbf{FuncLinear 2} & 0.685 & 6.00  & 0.216 & 11.47 & 0.304 & 5.30  & 0.256 & 28.93 \\
        & Gauss 1               & 1.108 & n/a   & 0.450 & n/a   & 0.304 & 8.76  & 0.861 & n/a   \\
        & Gauss 2               & 0.689 & 7.65  & 0.217 & 15.62 & 0.305 & 7.99  & 0.257 & 50.15 \\
        & \textbf{FuncGauss 1}  & 0.688 & 8.59  & 0.218 & 12.34 & 0.710 & n/a   & 0.795 & n/a   \\
        & \textbf{FuncGauss 2}  & 0.689 & 6.72  & 0.218 & 15.39 & 0.305 & 7.24  & 0.257 & 31.05 \\
    \bottomrule
    \end{tabular}
    \label{tab:linvel}
\end{table*}

For additional optical flow estimation results, refer to \Cref{tab:mpc_results} and \Cref{fig:appendixofall}. The results are consistent with the main text that our method improves the overall robustness.
\begin{table*}[ht]
    \centering
    \small
    \caption{Comparison of MPC and MPC (w/ FBP) across multiple sequences. 
    Metrics are End-Point Error (EPE), Angular Error (AE), and percentage of outliers (\%Out).}
    \begin{tabular}{p{0.14\linewidth}|
                    >{\centering\arraybackslash}p{0.035\linewidth}
                    >{\centering\arraybackslash}p{0.035\linewidth}
                    >{\centering\arraybackslash}p{0.045\linewidth}|
                    >{\centering\arraybackslash}p{0.035\linewidth}
                    >{\centering\arraybackslash}p{0.035\linewidth}
                    >{\centering\arraybackslash}p{0.045\linewidth}|
                    >{\centering\arraybackslash}p{0.035\linewidth}
                    >{\centering\arraybackslash}p{0.035\linewidth}
                    >{\centering\arraybackslash}p{0.045\linewidth}|
                    >{\centering\arraybackslash}p{0.035\linewidth}
                    >{\centering\arraybackslash}p{0.035\linewidth}
                    >{\centering\arraybackslash}p{0.045\linewidth}}
    \toprule
    \multirow{2}{*}{Method} 
      & \multicolumn{3}{c|}{All} 
      & \multicolumn{3}{c|}{interlaken\_00\_b} 
      & \multicolumn{3}{c|}{interlaken\_01\_a} 
      & \multicolumn{3}{c}{thun\_01\_a} \\
    \cmidrule(lr){2-13}
      & EPE & AE & \%Out 
      & EPE & AE & \%Out 
      & EPE & AE & \%Out 
      & EPE & AE & \%Out \\
    \midrule
    MPC & 2.81 & 8.96 & 14.48 & 3.33 & 5.11 & 19.41 & 2.42 & 6.17 & 16.15 & 1.48 & 6.70 & 8.33 \\
    MPC (w/ FBP) & 2.54 & 8.33 & 13.31 & 3.26 & 5.05 & 18.96 & 2.36 & 5.70 & 15.70 & 1.42 & 6.39 & 7.52 \\
    \midrule
    \multirow{2}{*}{Method} 
      & \multicolumn{3}{c|}{thun\_01\_b} 
      & \multicolumn{3}{c|}{zurich\_city\_12\_a} 
      & \multicolumn{3}{c|}{zurich\_city\_14\_c} 
      & \multicolumn{3}{c}{zurich\_city\_15\_a} \\
    \cmidrule(lr){2-13}
      & EPE & AE & \%Out 
      & EPE & AE & \%Out 
      & EPE & AE & \%Out 
      & EPE & AE & \%Out \\
    \midrule
    MPC & 1.47 & 5.88 & 8.18 & 5.98 & 21.56 & 21.02 & 1.97 & 9.74 & 15.25 & 1.54 & 6.71 & 9.25 \\
    MPC (w/ FBP) & 1.45 & 5.64 & 7.67 & 4.71 & 19.65 & 17.67 & 1.90 & 9.22 & 13.82 & 1.50 & 6.22 & 7.95 \\
    \bottomrule
    \end{tabular}
    \label{tab:mpc_results}
\end{table*}

\begin{figure*}
    \centering
    \includegraphics[width=.9\linewidth]{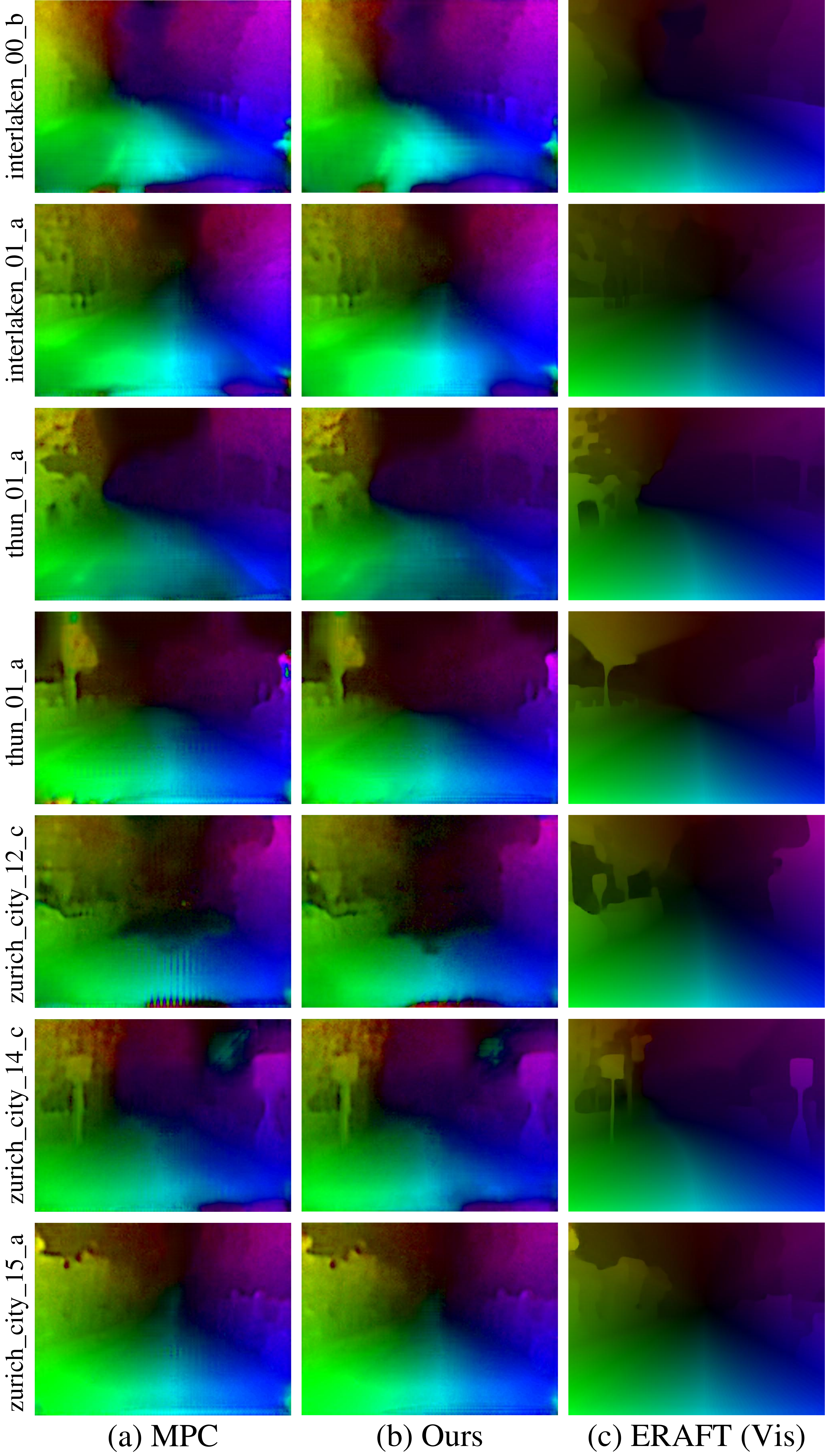}
    \caption{Optical flow estimation results on all 7 test sequences of DSEC \citep{gehrig2021dsec}.}
    \label{fig:appendixofall}
\end{figure*}

\end{document}